\let\cref\Cref
\title{Conceptual Views on Tree Ensemble Classifiers}
\date{tba 2022}
\institute{University of Kassel, Germany}
\newcommand{\Scon}{\mathbb{S}}
\DeclareMathOperator{\Ext}{Ext}
\DeclareMathOperator{\Int}{Int}
\newcounter{countitems}
\newcounter{nextitemizecount}
\newcommand{\setupcountitems}{%
  \stepcounter{nextitemizecount}%
  \setcounter{countitems}{0}%
  \preto\item{\stepcounter{countitems}}%
}
\newcommand{\computecountitems}{%
  \edef\@currentlabel{\number\c@countitems}%
  \label{countitems@\number\numexpr\value{nextitemizecount}-1\relax}%
}
\newcommand{\nextitemizecount}{%
  \getrefnumber{countitems@\number\c@nextitemizecount}%
}
\newcommand{\previtemizecount}{%
  \getrefnumber{countitems@\number\numexpr\value{nextitemizecount}-1\relax}%
}
\computecountitems\ifnumcomp{\previtemizecount}{>}{3}{\end{multicols}}{}}
\newcommand\blfootnote[1]{%
  \begingroup
  \renewcommand\thefootnote{}\footnote{#1}%
  \addtocounter{footnote}{-1}%
  \endgroup
}
\begin{document}
\parindent0pt

\author{Tom Hanika\inst{1,2} \and Johannes Hirth\inst{1,2}}

\date{\today}

\institute{%
  Knowledge \& Data Engineering Group,
  University of Kassel, Germany\\[0.5ex]
  \and
  Interdisciplinary Research Center for Information System Design\\
  University of Kassel, Germany\\[0.5ex]
  \email{tom.hanika@cs.uni-kassel.de, hirth@cs.uni-kassel.de}
}

\maketitle
\blfootnote{Authors are given in alphabetical order.
  No priority in authorship is implied.}

\begin{abstract}
  Random Forests and related tree-based methods are popular for
  supervised learning from table based data. Apart from their ease of
  parallelization, their classification performance is also
  superior. However, this performance, especially parallelizability,
  is offset by the loss of explainability. Statistical methods are
  often used to compensate for this disadvantage. Yet, their ability
  for local explanations, and in particular for global explanations,
  is limited. In the present work we propose an algebraic method,
  rooted in lattice theory, for the (global) explanation of tree
  ensembles. In detail, we introduce two novel conceptual views on
  tree ensemble classifiers and demonstrate their explanatory
  capabilities on Random Forests that were trained with standard
  parameters.
\end{abstract}
\begin{keywords}
Decision~Tree,
Random~Forest, Ensemble~Classification, Explanation,
Formal~Concept~Analysis, Explainable~AI
\end{keywords}

\section{Introduction}

Decision trees are among the most popular explainable machine learning
models. That is why they are often used as surrogates for other, less
transparent machine learning models.  However, one drawback is that
decision trees often do not perform as well as more contemporary
classification procedures. Furthermore, decision trees can naturally
not cope with missing data, i.e., without further help or data
preparation.

A popular class of classifiers, tree ensembles, do remedy these
disadvantages. They employ multiple tree structures simultaneously
(\emph{boosting}) and make potentially use of individual
\emph{baggings} of the data, e.g., Random Forests or Gradient Boosted
Trees~\cite{rforest,TBoost}.  While these methods are capable of high
classification performance, they do not possess the same
inter\-pre\-ta\-bi\-li\-ty as decision trees. This fact is due to the
dispersion of information into a large number of incomparable parallel
branches from differently rooted trees. There are quite attempts to
rectify this problem for explainability. One approach to explain tree
ensembles is merging all trees into a single decision
tree~\cite{Tmerging}. Even though the resulting tree structure can be
called (more) ``explainable'' (because it is a tree), it tends
to grow incomprehensibly large and also loses the ability to handle
missing values.

With the present work we propose a novel method for translating tree
ensembles into a data structure that is interpretable by design, while
allowing for parallelism to cope with missing information. We use
``lattices'' (i.e., ordered sets with supremum and infimum), in
contrast to trees~\cite{pmlr-v119-vidal20a}. While in a tree there is
always a unique path from the root to a node, lattices allow for
multiple (parallel) paths in a single lattice, leading to the same
conclusion element. This results in a better handling of missing
values.

Yet, we should not expect any data structure representing an ensemble
classifier to be handy and of manageable size. Thus, the resulting
lattices are large.  Fortunately, there is a mathematically sound
theory for lattices, which allows to generate manageable projections
of complex structures, which together contain the complete
information. We employ Formal~Concept~Analysis
(FCA)~\cite{Wille1982,fca-book}, which is able to deal with such
structures by design and offers a large framework of methods to
achieve representations of human comprehensible size. More
importantly, FCA provides a large set of tools to interpret these
representations and post-process them further.

Our work is not the first to take the step from tree ensembles to
lattices~\cite{dudyrev}, however, it drives this research in two
aspects: first, the proposed lattice representations achieve an
unprecedented expressiveness and thus explainability of tree ensemble
classifiers. Second, we provide a formal interpretation framework on
how to understand a classifier through the lens of conceptual
(lattice) views. These views allow for both local and global
explanations with varying levels of detail, as we will demonstrate.

In our experimental study, we demonstrate our explanation method and
its applicability on a real world example from the openml
CC18~\cite{openml} classification benchmark data set, namely,
\emph{car}~\cite{uci} (binary class
version\footnote{\url{https://www.openml.org/d/991}}). The analyzed
Random Forest was trained with realistic parameter assumptions, i.e.,
we used 100 trees and did not limit their individual depths.

\section{Motivation and Formal definitions}
\label{sec:formalities}
\FCA (FCA) is a mathematical theory of concepts and concept
hierarchies, with applications to Data Science and Knowledge
Processing. Since the purpose of this paper is to explore whether \FCA
can provide informative views of tree based classifiers, we want to
formally introduce relevant notions and notations.

\subsection{Formal concepts and conceptual views}
\label{sec:form-conc-conc}
At first glance, it seems very limiting that FCA focuses on one basic data
type, that of a binary relation between two sets (a \textbf{formal
  context} in the language of FCA), because data comes in many
different formats. But this limitation is intentional. It allows a
cleaner separation of objective formal data analysis and subjective
interpretation, and it allows a unified, clear structure of the 
mathematical theory.

FCA handles the many different data formats in a two-step
process. First, the data is transformed into the standard form --
that is, into a formal context -- and in the second step that context is
analyzed conceptually. The first step, called \textbf{conceptual
  scaling},\footnote{The word ``scaling'' here refers to measurement
  theory, not whether algorithms can be applied to large data sets.} 
is understood as an act of interpretation and depends on
subjective decisions of the analyst, who must reveal how the data at
hand is meant. It is therefore neither unambiguous nor automatic, and
usually does not map the data in its full complexity. However, it is
quite possible that several such \textbf{conceptual views} together
completely reflect the data. 

The classification data for tree classifiers are usually lists of
n-tuples, as in relational databases or in so-called data tables. In 
FCA terminology, ond speaks of a \textbf{many-valued context}. In such
a many-valued context, the rows have different names (thereby forming
a key), and so do the columns, whose names are called
\textbf{(many-valued) attributes}. The entries in the table are the
values. For classification it is usually assumed that the values of each 
attribute are linearly ordered. This order is then used to split the
data. This does not mean that the values have to be numeric. \texttt{low}
$<$ \texttt{medium} $<$ \texttt{high} and \texttt{true} $<$
\texttt{false} are examples of such \textbf{interordinal} data as
well. For numeric data one could, in addition to $a < b$, for
example use $a^2 < \cos(b)$ as a splitting criterion. Not doing so is
an interpretive decision in the sense discussed above.

To illustrate the notations that now follow, a small example of a
decision tree for the \emph{tennis play}~\cite{tennis} data set is
presented in  \cref{fig:tennis-dt}.



\begin{figure}
  \newcommand{\scale}{.75}
  \newcommand{\scaledt}{.54}
  \newcommand{\mpl}{.45}
  \newcommand{\mpr}{.54}
  \begin{center}
    \scalebox{\scale}{\begin{tabular}{|c||c|c|c|c||c|} \bottomrule
                        $\mathbb{D}$& overlook & temperature & humidity & windy & play\\
                        \hline \hline
                        0&sunny&hot&high&False&no\\
                        1&sunny&hot&high&True&no\\
                        2&overcast&hot&high&False&yes\\
                        3&rainy&mild&high&False&yes\\
                        4&rainy&cool&normal&False&yes\\
                        5&rainy&cool&normal&True&no\\
                        6&overcast&cool&normal&True&yes\\
                        7&sunny&mild&high&False&no\\
                        8&sunny&cool&normal&False&yes\\
                        9&rainy&mild&normal&False&yes\\
                        10&sunny&mild&normal&True&yes\\
                        11&overcast&mild&high&True&yes\\
                        12&overcast&hot&normal&False&yes\\
                        13&rainy&mild&high&True&no\\ \hline
                      \end{tabular}}
                  \end{center}
\vspace*{5mm} 
\begin{tikzpicture}[scale=\scaledt,sibling distance=10em, 
  level 1/.style={sibling distance=110mm},
  level 2/.style={sibling distance=53mm},
  level 3/.style={sibling distance=52mm},
  level 4/.style={sibling distance=30mm},
  output/.style={level distance=6ex},
  output-node/.style={text=red!80!white,draw=none},
  every node/.style = {shape=rectangle, rounded
    corners, draw, align=center}]
  \node[shape=circle] {}
  child { node {$\text{humidity}\leq\text{normal}$}
          child {node {$\text{overlook}\leq\text{rainy}$}
                 child {node {$\text{not windy}$}
                        child[output] {node[output-node] {yes}edge from parent[draw=none]}}
                 child {node {\text{windy}}
                        child[output] {node[output-node] {no}edge from parent[draw=none]}}}
          child {node {$\text{overlook}\geq\text{overcast}$}
                 child[output] {node[output-node] {yes}edge from parent[draw=none]}}} 
  child { node {$\text{humidity}\geq\text{high}$}
          child {node {$\text{overlook}\leq\text{overcast}$}
                 child {node {$\text{overlook}\leq\text{rainy}$}
                        child {node {$\text{not windy}$}
                               child[output] {node[output-node] {yes} edge from parent[draw=none]}}
                        child {node {windy}
                               child[output] {node[output-node] {no} edge from parent[draw=none]}}}
                 child {node {$\text{overlook}\geq\text{overcast}$}
                        child[output] {node[output-node] {yes} edge from parent[draw=none]}}}
          child {node {$\text{overlook}\geq\text{sunny}$}
                 child[output] {node[output-node] {no} edge from parent[draw=none]}}};
\end{tikzpicture}

\caption{Decision tree for the tennis data set. Each data object
  follows the path from the root (the top node) along the predicates it
  fulfills until it reaches a leaf node. The decision for ``play'' highlighted
  annotated in red.}  \label{fig:tennis-dt}
\end{figure}
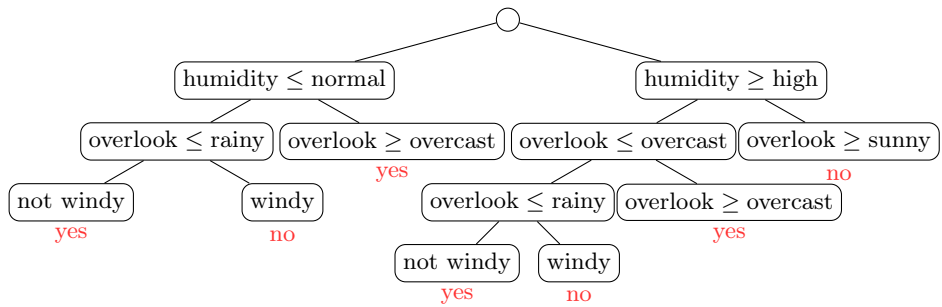

\subsection{Tree based  classifiers}
\label{sec:treebased}
As usual a decison tree $\mathcal{T}$ is a proper binary tree, i.e., a
rooted tree in which every non-terminal node\footnote{Terminal nodes
  are also called \emph{leaves}, non-terminal ones are \emph{inner}}
has exactly two children. With the exception of the root $r$, each node
$n$ is annotated by a \emph{predicate} $\phi(n)$ that \emph{splits}
the data, so that every data object either 
satisfies $\phi(n)$ or its negation $\neg\phi(n)$. The annotation is
such that the two child nodes of an inner node are negated to each
other: if one is annotated by $P$, then the other is
annotated by $\neg P$. According to these definitions the set
$\mathcal{P}(\mathcal{T})\coloneqq \{\phi(n)\mid n\in
\mathcal{T}\setminus\{r\}\}$ of predicates used for $\mathcal{T}$
  contains the negation $\neg P$ of every $P\in \mathcal{P}(\mathcal{T})$.

Each such tree
carries a natural order, in which for nodes $m,n$ we have $$m\le
n:\iff n \mbox{ lies on the unique path from }m\mbox{ to the root.}$$

The example in \cref{fig:tennis-dt} shows in its upper part a data
table with fourteen data objects, numbered $0, \ldots, 13$. There are
four  attributes, ``overlook'', ``temperature'', ``humidity'',
and ``windy'', while the last column ``play'' contains the
classification outcome. 
Each attribute is used with an
ordinal interpretation of its values, such as
\begin{center}\itshape
  rainy $<$ overcast $<$ sunny,\\
  cool $<$ mild $<$ hot,\\
  normal $<$ high, \mbox{\rm and} \\
  not windy $<$ windy.
\end{center}

In addition to that we derive an \emph{interordinal} interpretation of
these values using the $\geq$ and $\leq$ relations.  For
classification, a data object $g$ is threaded through the tree from
the root to a leaf node $b$ such that it satisfies all predicates
along the path $r,\dots,b$. We call $b$ the
\emph{classification/decision leaf} and the path $r,\dots,b$ the
\emph{decision path} of $g$. Finally there is a second mapping,
associating to every leaf the classification outcome, usually ``yes''
or ``no''.

\subsection{Conceptual views via conceptual scaling}
\label{sec:con-view-scaling}
A \textbf{formal context} is a triple $\context:=(G,M,I)$, where $G$
and $M$ are sets and $I\subseteq G\times M$ is a binary relation. As
said above, this is the basic data structure used in \FCA. The
elements of $G$ are called the \textbf{objects}, those of $M$ the
\textbf{attributes} of the formal context $\context$. One reads
$(g,m)\in I$ as \emph{``{}the object $g$ has the attribute $m$''.}

A \textbf{many-valued context} $\mathbb{D}\coloneqq (G,M,W,I)$ again has a set $G$ of
\emph{objects}, a set $M$ of \textbf{many-valued attributes}, and a
set $W$ of \textbf{attribute values}. The relation $I\subseteq G\times
M\times W$ here is ternary, with $(g,m,w)\in I$ expressing that
\emph{the object $g$ has the value $w$ for the attribute $m$.} It is
assumed that each object has at most one value for each attribute, so
that $$(g,m,w_{1})\in I, (g,m,w_{2})\in I \implies w_{1}=w_{2}.$$ 

Furthermore, we call $\mathbb{D}$ \emph{complete} iff for every
$g\in G,m\in M$ there is a $w\in W$ with $(g,m,w)\in I$.  By
$m^{\mathbb{D}} \coloneqq \{w\in W \mid\exists g\in G (g,m,w)\in I\}$
we denote the set of all values $w\in W$ such that there is an object
$g\in G$ that is in incidence with $w$ for attribute $m$.  The process
of deriving a formal context from a many-valued one is called
\textbf{conceptual scaling}. There are several variants for this. The
simplest is \textbf{plain scaling}, where one simply specifies how the
sets of values of each many-valued attribute are to be understood. For
this purpose, a formal context (a ``\textbf{scale}'') is specified for
each many-valued attribute, such that the set of objects of this scale
contains the values of this attribute. Each value then is replaced by
the corresponding row of that scale.  \cref{fig:scaling} demonstrates
this for the tennis data set.  The many-valued context here is given
by the data table in \cref{fig:tennis-dt} (ignoring the last
column). For each of the four many-valued attributes a scale is given
in the left half of \cref{fig:scaling}. The formal context on the
right was derived from the data table in \cref{fig:tennis-dt} via
plain scaling, i.e., by replacing the values by the respective rows of
the scales.

\begin{figure}[t]
  \centering
  \newcommand{\ro}{90}
  \newcommand{\ros}{90}
  \newcommand{\row}{77}
  \newcommand{\roattr}{45}
  \newcommand{\scale}{.8}
  \newcommand{\mpl}{.5}
  \newcommand{\mpr}{.49}
  \begin{minipage}{\mpl\linewidth}
  \scalebox{\scale}{\begin{tabular}{|c||c|c|c|c|}
    \bottomrule
    \rotatebox{\roattr}{\phantom{x.}Overlook\phantom{xx}} 
    & \rotatebox{\ro}{$\leq \emph{rainy}$}
    & \rotatebox{\ro}{$\leq \emph{overcast}$} 
    & \rotatebox{\ro}{$\geq \emph{overcast}$} 
    & \rotatebox{\ro}{$\geq \emph{sunny}$}  \\
    \hline \hline
    rainy&$\times$&$\times$&& \\
    overcast&&$\times$&$\times$&\\
    sunny&&&$\times$&$\times$\\
 \hline
  \end{tabular}}
\scalebox{\scale}{\begin{tabular}{|c||c|c|}
    \bottomrule
   \rotatebox{\roattr}{Humidity} 
                    & \rotatebox{\ro}{$\leq \emph{normal}$} 
                    &\rotatebox{\ro}{$\geq \emph{high}$} \\
    \hline \hline
    normal&$\times$&\\
    hot&&$\times$\\ \hline
  \end{tabular}}\\
  \scalebox{\scale}{\begin{tabular}{|c||c|c|c|c|}
    \bottomrule
   \rotatebox{\roattr}{Temperature} 
                      & \rotatebox{\ro}{$\leq \emph{cold}$} 
                      & \rotatebox{\ro}{$\leq \emph{mild}$} 
                      & \rotatebox{\ro}{$\geq \emph{mild}$} 
                      & \rotatebox{\ro}{$\geq \emph{hot}$}  \\
    \hline \hline
    cold&$\times$&$\times$&&\\
    mild&&$\times$&$\times$&\\
    hot&&&$\times$&$\times$ \\ \hline
  \end{tabular}}
  \scalebox{\scale}{\begin{tabular}{|c||c|c|}
    \bottomrule
    \rotatebox{\roattr}{Wind} 
                      & \rotatebox{\ro}{\emph{windy}} 
                      & \rotatebox{\ro}{\emph{not windy}} \\
    \hline \hline
    True&$\times$&\\
    False&&$\times$\\ \hline
  \end{tabular}}
  \end{minipage}
  \begin{minipage}{\mpr\linewidth}
    \scalebox{\scale}{
\begin{tabular}{|c||c|c|c|c|c|c|c|c|c|c|c|c|}
\bottomrule
  $\mathbb{I}(\mathbb{D})$& \rotatebox{\ros}{$\emph{Overlook}\leq\emph{rainy}$}
  & \rotatebox{\ros}{$\emph{Overlook}\leq\emph{overcast}$}
  & \rotatebox{\ros}{$\emph{Overlook}\geq\emph{overcast}$}
  & \rotatebox{\ros}{$\emph{Overlook}\geq\emph{sunny}$}
  & \rotatebox{\ros}{$\emph{Temperature}\leq\emph{cool}$}
  & \rotatebox{\ros}{$\emph{Temperature}\leq\emph{mild}$}
  & \rotatebox{\ros}{$\emph{Temperature}\geq\emph{mild}$}
  & \rotatebox{\ros}{$\emph{Temperature}\geq\emph{hot}$}
  & \rotatebox{\ros}{$\emph{Humidity}\leq\emph{normal}$}
  & \rotatebox{\ros}{$\emph{Humidity}\geq\emph{high}$}
  & \rotatebox{\ros}{\emph{windy}}
  & \rotatebox{\ros}{\emph{not windy}}\\
    \hline \hline
0& &&$\times$&$\times$ & &&$\times$&$\times$ & &$\times$ & &$\times$\\ 
1& &&$\times$&$\times$ & &&$\times$&$\times$ & &$\times$ & $\times$&\\ 
2& &$\times$&$\times$& & &&$\times$&$\times$ & &$\times$ & &$\times$\\ 
3& $\times$&$\times$&& & &$\times$&$\times$& & &$\times$ & &$\times$\\ 
4& $\times$&$\times$&& & $\times$&$\times$&& & $\times$& & &$\times$\\ 
5& $\times$&$\times$&& & $\times$&$\times$&& & $\times$& & $\times$&\\ 
6& &$\times$&$\times$& & $\times$&$\times$&& & $\times$& & $\times$&\\ 
7& &&$\times$&$\times$ & &$\times$&$\times$& & &$\times$ & &$\times$\\ 
8& &&$\times$&$\times$ & $\times$&$\times$&& & $\times$& & &$\times$\\ 
9& $\times$&$\times$&& & &$\times$&$\times$& & $\times$& & &$\times$\\ 
10& &&$\times$&$\times$ & &$\times$&$\times$& & $\times$& & $\times$&\\ 
11& &$\times$&$\times$& & &$\times$&$\times$& & &$\times$ & $\times$&\\ 
12& &$\times$&$\times$& & &&$\times$&$\times$ & $\times$& & &$\times$\\ 
13& $\times$&$\times$&& & &$\times$&$\times$& & &$\times$ & $\times$&\\ 
\hline
  \end{tabular}}
  \end{minipage}
  \caption{The conceptual scaling of the tennis data set
    (\cref{fig:tennis-dt}). Each incidence is indicated by a $\times$
    in the cross-tables. The concept lattice of the derived context has 108
    concepts.} 
  \label{fig:scaling}
\end{figure}

An important step in using FCA is to unfold the formal context into a
\textbf{concept lattice}. The elements of this algebraic structure are called
\textbf{formal concepts}. They are defined as pairs $(A,B)$ of sets, where
$A\subseteq G$, $B\subseteq M$, $A'=B$ and $A=B'$ hold with 
$A' :=\{m\in M\mid (g,m)\in I \mbox{ for all }g\in A\}$ and $B':=\{g\in
G\mid (g,m)\in I \mbox{ for all } m\in B\}$. $A$ is called the
\textbf{extent} and $B$ the \textbf{intent} of the formal concept $(A,B)$.
The derived context in 
\cref{fig:scaling} has 108 formal concepts, which is considerably
larger than the size of the decision tree in \cref{fig:tennis-dt}.

This example already demonstrates that the full concept lattice of the
formal context derived from plain conceptual scaling is too complex to
be instructive. It indeed unfolds all of the conceptual structure of
the scaled data table. Instead it may be useful to study only
carefully selected subsets of the derived attributes and the
sub-contexts induced by these. This is what Wille calls conceptual
views in \textcite{conf/fca/Wille05}. In the following text, we refer by
\emph{contextual views} of a many-valued context $\mathbb{D}$ to
formal contexts that were derived from $\mathbb{D}$ by means of
conceptual scaling. Its corresponding concept lattice, or parts
thereof, are called \emph{conceptual views}. As the conceptual and the
contextual view are in one to one correspondence, we will often simply
refer to \emph{views}.

\textbf{Logical scaling}~\cite{logical-scaling} goes one step further
than plain scaling. With logical scaling it is allowed to use
(propositional) logical combinations of the scale attributes as
attributes of a derived context. These scale attributes may originate
from different many-valued attributes. This simple technique is easier
to understand with an example. In the tennis data set, the weather
conditions are described by specifying values for humidity, wind,
etc. You may want to use other attributes that can be composed from
the given ones, such as
\begin{center}\itshape
nice := (temperature = mild) $\wedge$ $\neg$windy.
\end{center}
Logical scaling allows this. Again, we will limit ourselves to a small
selection of such attributes. That is why we speak of a
\textbf{conceptual view} here as well.

\subsection{Learning a decision tree from a many valued context}
In order to train a decision tree on a data table (which is formalized
as a many-valued context $\mathbb{D}=(G,M,W,I)$), we need to know the
classification labels for all $g\in G$. The tennis data set in
\cref{fig:tennis-dt} shows an example. The last column contains the
classification labels, while the rest of the table represents a
many-valued context. However, this example is tiny compared to
realistic datasets from real life. For such one needs fast
implementations like the C4.5
algorithm~\cite{DBLP:books/mk/Quinlan93}. This algorithm chooses in
every step $n$ that \emph{predicate} $\phi(n)$ that most effectively
splits its set $G$ of samples into subsets. These predicates usually
are attribute-value pairs, and we write $g\models\phi(n)$ if the
object $g$ has this value for the attribute and $g\models\neg \phi(n)$
otherwise. The term \emph{most effectively} is taken with respect to
some measure of information entropy.

The results is a decision tree $\mathcal{T}$, as described
in~\cref{sec:treebased}. In the first step, the splitting decision is
made for the root node, and the set $G$ of all samples is divided into
two sets, which are passed to the child nodes. The predicate that was
used for this split is written as annotation to the child nodes, in
positive or negated form, respectively. Any node $n\in\mathcal{T}$,
with exception of the root, thereby receives an annotation (for
completeness, the root is annotated with $\top$), and a data object
$g\in G$ reaches a leaf of the tree if and only if it is a model of
each predicate annotating a node on the unique path to that leaf.   

For a given decision tree $\mathcal{T}$ and its training data set
$\mathbb{D}$, two main explanatory tasks can be formulated. The first
addresses the question how adequately $\mathcal{T}$ represents the
training data, in particular the objects $g\in G$, and which general
explanations can be inferred from $\mathcal{T}$ using $G$. These
explanations range from local ones, i.e., why was an object $g$
classified to a particular class from $\mathcal{C}$, and global ones,
such as, which predicate combination describe a class.  The second
task is to understand the \emph{view} of $\mathcal{T}$ on a so far
unknown set of objects $\check G$, of which every object $g\in\check
G$ is represented using the attributes and value domains of
$\mathbb{D}$, and may even have values missing. Again, these views can
be locally and globally.  

\section{Concept Lattices from Tree Classifiers}\label{dtree-scales}
In the following we introduce different conceptual views on tree
classifiers. We first revisit three approachs from the literature and
derive a unified representation for them in the language of Formal
Concept Analysis.  We will consider these methods as baselines for our
two novel approaches in the next section.  

\subsection{Approaches from the literature}
\label{sec:appr-from-liter}


The first approach to investigate is given by the
\texttt{RandomTreesEmbedding} from
sklearn\footnote{\url{https://scikit-learn.org/stable/modules/generated/sklearn.ensemble.RandomTreesEmbedding.html}}
and simply reflects the clustering of the data objects induced by the
leaf nodes of the decision tree
\cite{tree-clustering1,tree-clustering2,tree-clustering3}. 


\begin{definition}[Conceptual Leaf View on $\mathcal{T}$]
  \label{def:leafscale}
  Given a many-valued context $\mathbb{D}\coloneqq (G,M,W,I)$ and a
  decision tree $\mathcal{T}$ that was trained on $\mathbb{D}$, we
  define the \emph{contextual leaf view on} $\mathcal{T}$ as
  \begin{equation*}
    \label{eq:2}
    \mathbb{N}(G,\mathcal{T})\coloneqq (G,\mathcal{L}(\mathcal{T}),J)
    \text{, where } (g,l)\in J \text{ iff } g\models\phi(n) \text{ for
      all } n\geq_{\mathcal{T}} l.
  \end{equation*}
  The corresponding concept lattice $\BV(\mathbb{N}(G,\mathcal{T}))$
  is called the \emph{conceptual leaf view on} $\mathcal{T}$. 
  A slight generalization allows to include not only the training
  data, but also other data objects with the same attributes and
  attribute values. $G$ may thus be replaced by $\check G$, as
  introduced at the end of Section~\ref{sec:formalities}. The set
  $\mathcal{L}(\mathcal{T})$, which is used as set of attributes here,
  is just the set of leaves of the decision tree.
\end{definition}

The incidence with such a leaf-attribute is defined by the conjunction
of all predicated on the unique path from the root to that leaf. This
being a logical combination of plain scaling attributes shows, that
this context is derived from \emph{logical scaling}. The particular
\emph{view}, i.e., which logical combinations were selected, depends
on the decision tree.

The conceptual leaf view using a particular set of objects $\check G$
enables to \emph{view} said objects through the classification leaves
of $\mathcal{T}$. This means, for any two objects
$g_{1},g_{2}\in\check G$ that are classified by the same leaf $l$, we
have that $\{g_{1}\}^{J}=\{g_{2}\}^{J}$, and therefore the object
concepts $(\{g_{1}\}^{JJ},\{g_{1}\}^{J})$ and
$(\{g_{2}\}^{JJ},\{g_{2}\}^{J})$ are equal. Informally said, $g_{1}$
and $g_{2}$ are clustered in the same concepts. We may note at this
point, that this view is limited to this fact. Hence, objects that are
classified by different leaf nodes do not share any attributes and are
therefore incomparable. \cref{fig:tennis-scales} shows the derived
context for our running example. 

More general, the conceptual leaf view is disappointingly simple, it
is just an antichain plus a top and a bottom element. This is due to
the fact that $\BV(\mathbb{N}(\check G,\mathcal{T}))$ is of nominal
scale, that is, it has exactly one concept per leaf, and all these
concepts are pairwise incomparable.  Altogether, this view is coarse
and does not exhibit hierarchical information, i.e, there are no
concepts in sub-concept relation, apart from those involving the top
$(\check G,\check G^{J})$ or bottom
$(\mathcal{L}(T)^{J},\mathcal{L}(T))$.

The second baseline view accounts for the whole order structure of the
decision tree $\mathcal{T}$~\cite{tree-clustering3}. The corresponding
concept lattice is an isomorphic representation to the one proposed by
previous work~\cite{dudyrev}.

\begin{definition}[Conceptual Tree View on $\mathcal{T}$]
  \label{def:treescale}
  Given a many-valued context $\mathbb{D}\coloneqq (G,M,W,I)$ and a
  decision tree $\mathcal{T}$ that was trained on $\mathbb{D}$, we
  define the \emph{contextual tree view on} $\mathcal{T}$ by taking
  the tree nodes as attributes. A node is incident with a data object
  if and only if it was used for classifying that object.
  \begin{equation*}
    \label{eq:3}
    \mathbb{T}(G, \mathcal{T})\coloneqq (G,\mathcal{T},J) \text{, where }
    (g,t)\in J \text{ iff } t \text{ is on the decision path of } g
    \text{ in }\mathcal{T}.
  \end{equation*}
  The corresponding concept lattice $\BV(\mathbb{T}(G, \mathcal{T}))$
  is called the \emph{conceptual tree view on} $\mathcal{T}$. 
  As in Definition~\ref{def:leafscale} above, $G$ may be replaced by a
  more general set $\check G$.
\end{definition}
In analogy to the leaf partition context, the contextual tree view can
also be understood as the result of a logical scaling. Each node is
then replaced by the conjunction of the predicates annotated along its 
path. 

In contrast to the contextual leaf view on $\mathcal{T}$ the
contextual tree view accounts for all nodes of $\mathcal{T}$.  Hence,
objects that are classified by different leaf nodes may have common
nodes in their decision paths. The more their respective decision
paths overlap, the more attributes they have in common.

The concept lattice $\BV(\mathbb{T}(G,\mathcal{T}))$ is
order-isomorphic to the decision tree with an added smallest element,
i.e., to $(\mathcal{T}\cup\{\bot \},\leq)$ where for all $n\in
\mathcal{T}:\ \bot \leq n$. The conceptual tree view on $\mathcal{T}$
can thus be considered as an almost one-to-one translation of the
decision tree into the realm of \FCA.

For a given arbitrary object sets $\check G$, only parts of the order
structure $(\mathcal{T}\cup\{\bot \},\leq)$ are reached. More precise,
since for all elements of $\check G$ there is an element of $G$ having
the same decision path, we can conclude that there is a (unique up to
context clarification) isomorphism from $\mathbb{T}(\check
G,\mathcal{T})$ into a sub-context of
$\mathbb{T}(G,\mathcal{T})$. Thus, $\Int(\mathbb{T}(\check
G,\mathcal{T}))\subseteq \Int(\mathbb{T}(G,\mathcal{T}))$. However,
for the rest of our work, we will not explore this relationship
further. Yet, we would like to refer the reader to our investigations
on concept measurements~\cite{smeasure}, in particular to Corollary
22.

\begin{definition}
  For a linearly ordered set $(V,\le)$ the (one-dimensional)
  \textbf{interordinal scale} over $V$ is the formal context
  $$(V,\{\leq,\geq\}\times V, \models),$$  where the incidence relation
  $\models$ is defined in the obvious manner: $$v\models (\leq,w):\iff
  v\leq w\mbox{ and }v\models(\geq,w):\iff v\geq w.$$ 
\end{definition}

\begin{figure}[t]
  \newcommand{\ro}{90}
  \newcommand{\x}{$\times$}
  \newcommand{\0}{\phantom{$\times$}}
  \newcommand{\scale}{.85}
\begin{minipage}{.5\linewidth}\centering
  \scalebox{\scale}{\begin{tabular}{|c||c|c|c|c|c|c|c|}
    \bottomrule
$\mathbb{N}(G,\mathcal{T})$    &$l_0$&$l_1$&$l_2$&$l_3$&$l_4$&$l_5$&$l_6$\\
    \hline \hline
    0&$\times$&&&&&&\\
    1&$\times$&&&&&&\\
    2&&$\times$&&&&&\\
    3&&&$\times$&&&&\\
    4&&&&&&$\times$&\\
    5&&&&&&&$\times$\\
    6&&&&&$\times$&&\\
    7&$\times$&&&&&&\\
    8&&&&&$\times$&&\\
    9&&&&&&$\times$&\\
    10&&&&&$\times$&&\\
    11&&$\times$&&&&&\\
    12&&&&&$\times$&&\\
    13&&&&$\times$&&&\\
    \hline
  \end{tabular}}
\end{minipage}
\begin{minipage}{.5\linewidth}\centering
  \scalebox{\scale}{
    \begin{tabular}{|c||c|c|c|c|c|c|c|c|c|c|c|c|}
      \bottomrule
      \rotatebox{0}{$\mathbb{P}(G,\mathcal{T})$}  &   
      \rotatebox{\ro}{$n_1$}&
      \rotatebox{\ro}{$n_2$}&
      \rotatebox{\ro}{$n_3$}&
      \rotatebox{\ro}{$n_4$}&
      \rotatebox{\ro}{$n_5$}&
      \rotatebox{\ro}{$n_6$}&
      \rotatebox{\ro}{$n_7$}&
      \rotatebox{\ro}{$n_8$}&
      \rotatebox{\ro}{$n_9$}&
      \rotatebox{\ro}{$n_{10}$}&
      \rotatebox{\ro}{$n_{11}$}&
      \rotatebox{\ro}{$n_{12}$}\\
      \hline \hline
       0&\0&\x&\0&\0&\0&\x&\0&\0&\0&\0&\0&\0\\
       1&\0&\x&\0&\0&\0&\x&\0&\0&\0&\0&\0&\0\\
       2&\0&\0&\0&\0&\x&\0&\0&\0&\0&\x&\0&\0\\
       3&\0&\x&\0&\0&\x&\0&\0&\0&\x&\0&\x&\0\\
       4&\x&\0&\x&\0&\0&\0&\x&\0&\0&\0&\0&\0\\
       5&\x&\0&\x&\0&\0&\0&\0&\x&\0&\0&\0&\0\\
       6&\x&\0&\0&\x&\0&\0&\0&\0&\0&\0&\0&\0\\
       7&\0&\x&\0&\0&\0&\x&\0&\0&\0&\0&\0&\0\\
       8&\x&\0&\0&\x&\0&\0&\0&\0&\0&\0&\0&\0\\
       9&\x&\0&\x&\0&\0&\0&\x&\0&\0&\0&\0&\0\\
       10&\x&\0&\0&\x&\0&\0&\0&\0&\0&\0&\0&\0\\
       11&\0&\0&\0&\0&\x&\0&\0&\0&\0&\x&\0&\0\\
       13&\0&\x&\0&\0&\x&\0&\0&\0&\x&\0&\0&\x\\
      \hline
    \end{tabular}}
\end{minipage}
\caption{The contextual leaf view on $\mathcal{T}$ (right in
  \cref{fig:tennis-dt}) for the running example (left in
  \cref{fig:tennis-dt}). Its concept lattice contains 55 concepts. The
  contextual tree view  $\mathbb{P}(G,\mathcal{T})$ of $\mathcal{T}$ is shown on the
  right.}
  \label{fig:tennis-scales}
\end{figure}

Data sets for decision tree classification have\footnote{If not so,
  they can be ordered linearly. } linearly ordered value sets and
therefore associated interordinal scales for all attributes. These can
be used for plain scaling. 

The third and last baseline that we want to
introduce is the most expressive in terms of formal concepts. In
contrast to the other two views, it solely depends on the many-valued
data set $\mathbb{D}$. Although its relevance to the scope of this
work may not be apparent at this point, the next section will
elaborate its importance.

\begin{definition}[Interordinal Scaling of $\mathbb{D}$]
  \label{def:interordinal}
  When $\mathbb{D}\coloneqq (G,M,W,I)$ is a many-valued context with
  linearly ordered value sets $(m^{\mathbb{D}},\leq_{m})$ for all many-valued 
  attribute sets $W(m)$, then the formal context \textbf{derived from
    interordinal scaling} has $G$ as its object set and attributes of
  the form $$(m,\leq_{m}, v)\mbox{ or }(m,\geq_{m}, v),$$ where $v$ is a value
  of the many-valued attribute $m$. The incidence is the obvious one,
  an object $g$ has e.g., the attribute $(m,\leq_{m},v)$ iff the value of
  $m$ for the object $g$ is $\leq_{m} v$. Instead of
  $(m,\leq_{m},v)$ or  $(m,\geq_{m},v)$ one  writes $m{\,:\;}\leq v$ and
  $m{\,:\;}\geq v$, respectively. Formally $\mathbb{I}(\mathbb{D}) \coloneqq
  (G,N,J)$, where  $$N\coloneqq\{m:\;\leq v\mid m\in M,v\in m^{\mathbb{D}}\}
  \cup\{m:\;\geq v\mid m\in M,v\in m^{\mathbb{D}}\}$$
  and
  $$ (g, m:\;\leq v)\in J:\iff m(g)\leq v,\qquad(g, m:\;\geq v)\in J:\iff m(g)\geq v.$$
 For simplicity, attributes which apply to all objects are usually omitted.
\end{definition}

The context derived from our introductory example (\cref{fig:tennis-dt})
via plain interordinal scaling is shown in \cref{fig:scaling}.
The linear orders of the value sets were already listed at the end of
\cref{sec:treebased}.
The conceptual scales are shown on the left. 
For example, in this derived context (on the right) object No.~5 has a
cross for the derived attribute \emph{``Temperature $\le$
  \text{mild}''} because in the original data the temperature value
for that object is \emph{``cool''}, and \emph{cool $\le$ mild}.

  


All just introduced scaling have in common, that their respective
concept lattices are atomistic. Moreover, in all lattices are the
atoms are given by the set of all object concepts. This observation
depends on two assumptions, a) there are no missing values for any of
the objects viewed by a scaling (cf. complete many-valued context) and
b) every leaf node is supported by an object $g\in G$.

\subsection{Predicate Views}
\label{sec:predscale}
The so far presented approaches for scaling tree-based classifiers do
not use the predicates as attribute sets. Yet, these predicates are
essential for human interpretation. Hence, we introduce in the
following two novel conceptual views, i.e., conceptual scalings, that
can also be extracted from the decision tree. However, in contrast
to~\cref{def:leafscale,def:treescale} will the attribute set be
comprised of the predicates $\mathcal{P}(\mathcal{T})$ instead of the
tree nodes.

In order to do this, we introduce an intermediate structure, in detail
a formal context, which takes the place of the annotation function
$\phi$ of $\mathcal{T}$. This context is defined by
$\mathbb{P}(\mathcal{T})\coloneqq
(\mathcal{T},\mathcal{P}(\mathcal{T}), J)$ where $(n,P)\in J$ iff
there exists a $n\leq_{\mathcal{T}} h$ with $\phi(h)=P$. That is, a
node $n$ of the tree $\mathcal{T}$ is in incidence with a predicate
$P\in\mathcal{P}(\mathcal{T})$ if and only if $P$ is annotated to $n$,
or a predecessor of $n$. In the following this context is called the
\emph{predicate view of $\mathcal{T}$}.We want to hint why this
structure enables a (formal) interpretation of $\mathcal{T}$ by means
of the predicates. For any node $n$ that is on the decision path of an
object $g$, we have that $g\models \{n\}^{J}$, i.e., $g$ is a model
for all predicates that are incident with $n$. Moreover, for the leaf
node $l$ on the decision path of $g$, the set $\{l\}^{J}$ is exactly
the set of predicates that were used by $\mathcal{T}$ to classify the
object $g$. Hence, we can interpret the classification for any object
in terms of $\mathcal{P}(\mathcal{T})$.

\subsubsection*{Tree Predicate View}
\begin{definition}[Conceptual Tree Predicate View]
  \label{def:treepredicatescale}
  For a many-valued context $\mathbb{D}\coloneqq (G,M,W,I)$ and a
  decision tree $\mathcal{T}$ (trained on $\mathbb{D}$), we define the
  context derived from \emph{contextual tree predicate view} on
  $\mathcal{T}$ by
  \begin{equation*}
    \label{eq:4}
    \mathbb{T}_{\mathcal{P}}(G, \mathcal{T})\coloneqq
    (G,\mathcal{P}(\mathcal{T}),I_{\mathbb{T}(G, \mathcal{T})}\circ I_{\mathbb{P}(\mathcal{T})}).
  \end{equation*}
  Analogously to~\cref{def:leafscale} we say \emph{conceptual tree
    predicate view} on $\mathcal{T}$ to $\BV(
  \mathbb{T}_{\mathcal{P}}(G, \mathcal{T}))$. Likewise, this view can
  be applied to unknown data $\check G$.
\end{definition}

This definition of a view differs slightly from the ones given in the
last section. First of all, the attribute set of the tree predicate
view is comprised of the predicates of $\mathcal{T}$. Moreover, the
incidence relation of $\mathbb{T}_{\mathcal{P}}(\check G,\mathcal{T})$
is implicitly given by the relation product $\circ$ of the incidences
from the tree view and the predicate view. Hence the name tree
predicate view.  Our reasoning here is that we want to link objects
to predicates via tree nodes. For example, if $g\in\check G$ is
incident with node $n\in\mathcal{T}$, and again $n$ is incident with
some predicate $P\in\mathbb{P}(\mathcal{T})$, then $g$ is incident
with $P$ in $I_{\mathbb{T}(\check G, \mathcal{T})}\circ
I_{\mathbb{P}(\mathcal{T})}$. In contrast to the tree view, objects
that have a disjoint decision path may still have predicates in
common, and therefore common incidences in
$I_{\mathbb{T}_{\mathcal{P}}(\check G, \mathcal{T})}$.

The concept lattice of tree predicate view
$\mathbb{T}_{\mathcal{P}}(\check G, \mathcal{T})$ is not necessarily
tree shaped, since additional concepts may emerge from the meet of
predicates that were annotated multiple times. In particular for the
case where $\check G=G$, we find that
\begin{equation}
  \label{eq:treepredicatescale}
  \Ext(\mathbb{T}(G, \mathcal{T}))\subseteq
  \Ext(\mathbb{T}_{\mathcal{P}}(G, \mathcal{T})),
\end{equation}
since for all $n\in \mathcal{T}$ we have that
\[n^{I_{\mathbb{T}(G,\mathcal{T})}}=\{\phi(m)\mid m\geq
  n\}^{I_{\mathbb{T}_{\mathcal{P}}(G,\mathcal{T})}}.\]

To see why this is true, we refer the reader to~\cref{lamma-scales}.
Another useful property we prove in~\cref{lamma-scales} for
$\mathbb{T}_{\mathcal{P}}(\mathcal{T})$ is that its concept lattice is
atomistic and its object extents are equal to those of
$\mathbb{T}(\mathcal{T})$. A natural consequence of this fact is, that
conceptual view's concept lattice can classify every object $g\in G$
in the same way as the tree classifier would. That is, for any
$g\in G$ the closure of $g$ in the tree predicate view is equal to the
closure in the leaf view, i.e., the set of objects that are classified
by the same leaf. This fact is even true for any $g\in \check G$ that
has no missing values. What is more important, the tree predicate view
exhibits concepts that are not related to a node of $\mathcal{T}$,
however, they explain how different nodes of $\mathcal{T}$ are related
in terms of common predicates. This is the reason for the super set
relation in \cref{eq:treepredicatescale}.

In the case that  all predicates are annotated exactly once in the tree then the
annotate function is injective, i.e., for any two $n,m\in \mathcal{T}$
it is true that $\phi(n)=\phi(m)\implies n=m$. Here, we find that $\Ext(\mathbb{T}(\check G, \mathcal{T}))=
\Ext(\mathbb{T}_{\mathcal{P}}(\check G,\mathcal{T}))$.

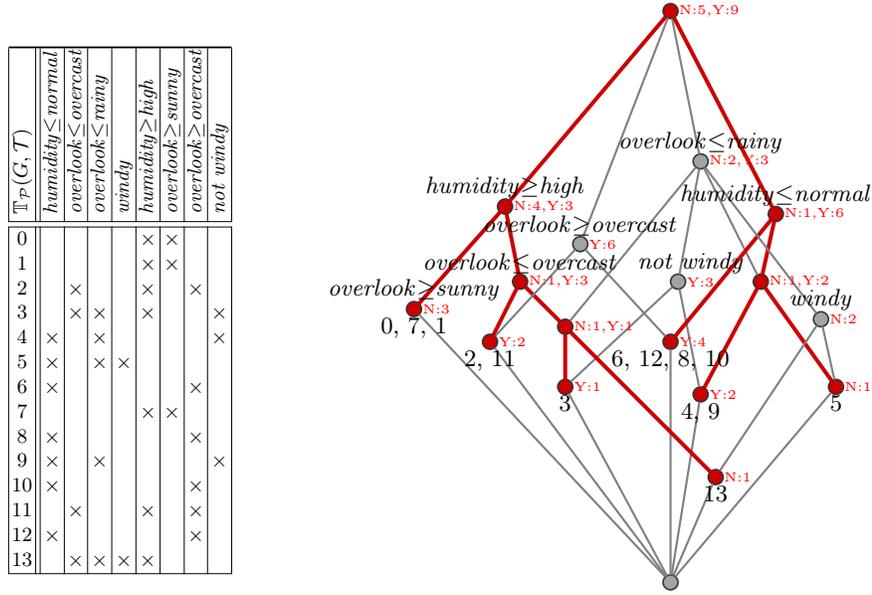
\begin{figure}[t]
  \newcommand{\scale}{.85}
  \newcommand{\ro}{90}
\begin{minipage}{.33\linewidth}
  \scalebox{\scale}{\begin{tabular}{|c||c|c|c|c|c|c|c|c|}
    \bottomrule
\rotatebox{\ro}{$\mathbb{T}_{\mathcal{P}}(G,\mathcal{T})$}     &\rotatebox{\ro}{\emph{humidity}$\leq$\emph{normal}}
                    &\rotatebox{\ro}{\emph{overlook}$\leq$\emph{overcast}}
                    &\rotatebox{\ro}{\emph{overlook}$\leq$\emph{rainy}}
                    &\rotatebox{\ro}{\emph{windy}}
                    &\rotatebox{\ro}{\emph{humidity}$\geq$\emph{high}}
                    &\rotatebox{\ro}{\emph{overlook}$\geq$\emph{sunny}}
                    &\rotatebox{\ro}{\emph{overlook}$\geq$\emph{overcast}}
                    &\rotatebox{\ro}{\emph{not windy}}\\
    \hline \hline
    0&&&&&$\times$&$\times$&&\\
    1&&&&&$\times$&$\times$&&\\
    2&&$\times$&&&$\times$&&$\times$&\\
    3&&$\times$&$\times$&&$\times$&&&$\times$\\
    4&$\times$&&$\times$&&&&&$\times$\\
    5&$\times$&&$\times$&$\times$&&&&\\
    6&$\times$&&&&&&$\times$&\\
    7&&&&&$\times$&$\times$&&\\
    8&$\times$&&&&&&$\times$&\\
    9&$\times$&&$\times$&&&&&$\times$\\
    10&$\times$&&&&&&$\times$&\\
    11&&$\times$&&&$\times$&&$\times$&\\
    12&$\times$&&&&&&$\times$&\\
    13&&$\times$&$\times$&$\times$&$\times$&&&\\
    \hline
  \end{tabular}}
\end{minipage}
\begin{minipage}{.3\linewidth}
  \colorlet{mivertexcolor}{black!80}
\colorlet{jivertexcolor}{black!20!red}
\colorlet{vertexcolor}{black!80}
\colorlet{bordercolor}{black!80}
\colorlet{linecolor}{gray}
\colorlet{dtreecolor}{jivertexcolor}
\tikzset{vertexbase/.style={semithick, shape=circle, inner sep=2pt, outer sep=0pt, draw=bordercolor},%
  vertex/.style={vertexbase, fill=vertexcolor!45},%
  mivertex/.style={vertexbase, fill=mivertexcolor!45},%
  jivertex/.style={vertexbase, fill=jivertexcolor},%
  divertex/.style={vertexbase, top color=mivertexcolor!45, bottom color=jivertexcolor!45},%
  dtree/.style={-, line width=1.5pt, color=dtreecolor},
  conn/.style={-, thick, color=linecolor}%
}
\tikzstyle{v} = [font=\tiny, color=red]
\tikzstyle{l} = [align=left]
\begin{tikzpicture}[scale=0.2]
  \begin{scope} 
    \begin{scope} 
      \foreach \nodename/\nodetype/\xpos/\ypos in {%
        0/vertex/0.0/0.0,
        1/jivertex/3.0/7.0,
        2/jivertex/-7.0/13.0,
        3/jivertex/11.0/13.0,
        4/jivertex/-12.0/16.0,
        5/jivertex/0.0/16.0,
        6/jivertex/2.0/12.5,
        7/jivertex/-7.0/17.0,
        8/vertex/10.0/17.5,
        10/jivertex/-17.05216554379211/18.175264677574614,
        11/jivertex/6.0/20.0,
        12/jivertex/-10.0/20.0,
        13/vertex/0.5/20.0,
        15/vertex/-6.0/22.5,
        16/jivertex/-11.0/25.0,
        17/jivertex/7.0/24.5,
        18/vertex/2.0/28.0,
        19/jivertex/0.0/38.0
      } \node[\nodetype] (\nodename) at (\xpos, \ypos) {};
    \end{scope}
    \begin{scope} 
      \path (0) edge[conn] (2);
      \path (13) edge[conn] (18);
      \path (18) edge[conn] (19);
      \path (7) edge[conn] (18);
      \path (7) edge[dtree] (12);
      \path (0) edge[conn] (6);
      \path (4) edge[conn] (15);
      \path (3) edge[dtree] (11);
      \path (6) edge[conn] (13);
      \path (11) edge[conn] (18);
      \path (11) edge[dtree] (17);
      \path (17) edge[dtree] (19);
      \path (5) edge[dtree] (17);
      \path (15) edge[conn] (19);
      \path (6) edge[dtree] (11);
      \path (0) edge[conn] (3);
      \path (0) edge[conn] (5);
      \path (4) edge[dtree] (12);
      \path (12) edge[dtree] (16);
      \path (0) edge[conn] (10);
      \path (0) edge[conn] (4);
      \path (0) edge[conn] (1);
      \path (10) edge[dtree] (16);
      \path (16) edge[dtree] (19);
      \path (2) edge[dtree] (7);
      \path (5) edge[conn] (15);
      \path (3) edge[conn] (8);
      \path (8) edge[conn] (18);
      \path (2) edge[conn] (13);
      \path (1) edge[conn] (8);
      \path (1) edge[dtree] (7);
    \end{scope}
    \begin{scope} 
      \foreach \nodename/\labelpos/\labelopts/\labelcontent in {%
        1/below//{13},
        2/below//{3},
        3/below//{5},
        4/below//{2, 11},
        5/below//{6, 12, 8, 10},
        6/below//{4, 9},
        8/above//{\emph{windy}},
        10/below//{0, 7, 1},
        10/above//{\emph{overlook}$\geq$\emph{sunny}},
        12/above//{\emph{overlook}$\leq$\emph{overcast}},
        13/above//{\phantom{xx}\emph{not windy}},
        15/above//{\emph{overlook}$\geq$\emph{overcast}},
        16/above//{\emph{humidity}$\geq$\emph{high}},
        17/above//{\emph{humidity}$\leq$\emph{normal}},
        18/above//{\emph{overlook}$\leq$\emph{rainy}},
        1/right/v/{N:1},
        2/right/v/{Y:1},
        3/right/v/{N:1},
        4/right/v/{Y:2},
        5/right/v/{Y:4},
        6/right/v/{Y:2},
        7/right/v/{N:1,Y:1}, 
        8/right/v/{N:2},
        10/right/v/{N:3},
        11/right/v/{N:1,Y:2},
        12/right/v/{N:1,Y:3},
        13/right/v/{Y:3},
        15/right/v/{Y:6},
        16/right/v/{N:4,Y:3},        
        17/right/v/{N:1,Y:6},
        18/right/v/{N:2,Y:3},
        19/right/v/{N:5,Y:9}
      } \coordinate[label={[\labelopts]\labelpos:{\labelcontent}}](c) at (\nodename);
    \end{scope}
  \end{scope}
\end{tikzpicture}
\end{minipage}
\caption{The contextual and conceptual tree predicate view on
  $\mathcal{T}$ (right in \cref{fig:tennis-dt}) for the running example
  (left in \cref{fig:tennis-dt}). The original decision tree
  $\mathcal{T}$ is highlighted in red in the concept lattice diagram.}
  \label{fig:tennis-pred}
\end{figure}

\subsubsection*{Interordinal Predicate View}
The just introduced tree predicate view is capable of reflecting the
predicates that are important for the specific classification of an
object $g$. However, the incidences of $g$ are limited to those
predicates annotated to the decision path of $g$. In the view to be
introduced in a moment we want to lift this restriction by extending
the incidence relation to all to all predicates
$P\in\mathcal{P}(\mathcal{T})$ for which $g$ is a model ($g\models
P$).

\begin{definition}[Conceptual Interordinal Predicate View]
  \label{defi:interordinalpredicatescale}
  For a mv-context $\mathbb{D}\coloneqq (G,M,W,I)$ and a decision tree
  $\mathcal{T}$ (trained on $\mathbb{D}$), we define the
  \emph{conceptual interordinal predicate view} on $\mathcal{T}$ by
  \begin{equation*}
    \label{eq:5}
    \mathbb{I}_{\mathcal{P}}(G, \mathcal{T})\coloneqq(G,\mathcal{P}(\mathcal{T}),J), \text{ where } (g,P)\in J\
    \text{iff}\ g \models P.
  \end{equation*}
  Analogously to all previous definitions we say \emph{conceptual
    interordinal predicate view} on $\mathcal{T}$ to $\BV(
  \mathbb{I}_{\mathcal{P}}(G, \mathcal{T}))$. Likewise, this view can
  be applied to previously unknown data $\check G$.
\end{definition}

There are two principle ways to derive the interordinal predicate
view in our setting. The natural way is to apply \emph{plain
  scaling}, which results in a sub-context of the interordinal
scaling of $\mathbb{D}$, i.e., $\mathbb{I}_{\mathcal{P}}(G, \mathcal{T})\leq
\mathbb{I}(\mathbb{D})$. This can be easily seen, since
$\mathcal{P}(\mathcal{T})\subseteq N$, where $N$ is defined as
in~\cref{def:interordinal}. Note, this definition requires the value
domains to be linearly ordered. The second method is to employ logical
scaling where each logical expression consists of conditions involving
exactly one many valued attribute. We depicted the interordinal
predicate view for our running example in~\cref{fig:tennis-inter}.

Since the interordinal predicate view is defined on the same set of
objects and attributes as the tree predicate views, both views are
related. More precisely, we find that the incidence relation of the
tree predicate view is a subset of the interordinal predicate view,
i.e., $I_{\mathbb{T}_{\mathcal{P}}(\check G,\mathcal{T})}\subseteq
I_{\mathbb{I}_{\mathcal{P}}(\check G,\mathcal{T})}$. 

With the following proposition we want to show how the classical and
our novel views are related to each other with respect to their
training data set, i.e., $\check G = G$. From this we can infer how,
or to which extent, the views can be employed for the explanation of
tree based classifiers. 

\begin{figure}[t]
  \newcommand{\scale}{.85}
  \newcommand{\ro}{90}
  \centering
  \scalebox{\scale}{\begin{tabular}{|c||c|c|c|c|c|c|c|c|}
    \bottomrule
\rotatebox{\ro}{$\mathbb{I}_{\mathcal{P}}(\mathcal{T})$} &\rotatebox{\ro}{\emph{humidity}$\leq$\emph{normal}}
                    &\rotatebox{\ro}{\emph{overlook}$\leq$\emph{overcast}}
                    &\rotatebox{\ro}{\emph{overlook}$\leq$\emph{rainy}}
                    &\rotatebox{\ro}{\emph{windy}}
                    &\rotatebox{\ro}{\emph{humidity}$\geq$\emph{high}}
                    &\rotatebox{\ro}{\emph{overlook}$\geq$\emph{sunny}}
                    &\rotatebox{\ro}{\emph{overlook}$\geq$\emph{overcast}}
                    &\rotatebox{\ro}{\emph{not windy}}\\
    \hline \hline
    0&&&&&$\times$&$\times$&$\times$&$\times$\\
    1&&&&$\times$&$\times$&$\times$&$\times$&\\
    2&&$\times$&&&$\times$&&$\times$&$\times$\\
    3&&$\times$&$\times$&&$\times$&&&$\times$\\
    4&$\times$&$\times$&$\times$&&&&&$\times$\\
    5&$\times$&$\times$&$\times$&$\times$&&&&\\
    6&$\times$&$\times$&&$\times$&&&$\times$&\\
    7&&&&&$\times$&$\times$&$\times$&$\times$\\
    8&$\times$&&&&&$\times$&$\times$&$\times$\\
    9&$\times$&$\times$&$\times$&&&&&$\times$\\
    10&$\times$&&&$\times$&&$\times$&$\times$&\\
    11&&$\times$&&$\times$&$\times$&&$\times$&\\
    12&$\times$&$\times$&&&&&$\times$&$\times$\\
    13&&$\times$&$\times$&$\times$&$\times$&&&\\
    \hline
  \end{tabular}}
\caption{The contextual interordinal predicate view on $\mathcal{T}$
  (right in \cref{fig:tennis-dt}) for the running example (left in
  \cref{fig:tennis-dt}). Its concept lattice contains 55 concepts.}
  \label{fig:tennis-inter}
\end{figure}

\begin{proposition}\label{lamma-scales}
  Let $\mathbb{D}$ be a complete mv-context and let $\mathcal{T}$ be a
  decision tree, which was trained on $\mathbb{D}$, such that every
  leaf node of $\mathcal{T}$ is supported. Then the following
  statements hold:

  \begin{enumerate}[i)]
  \item in every view $\mathbb{N}(G,\mathcal{T}),
    \mathbb{T}(G,\mathcal{T}), \mathbb{T}_{\mathcal{P}}(G,
    \mathcal{T})$, and $\mathbb{I}_{\mathcal{P}}(G, \mathcal{T})$ we find that the object concepts are the
    atoms of their respective concept lattice,
  \item $\Ext(\mathbb{N}(G,\mathcal{T}))\subseteq
    \Ext(\mathbb{T}(G,\mathcal{T}))\subseteq
    \Ext(\mathbb{T}_{\mathcal{P}}(G,\mathcal{T}))$ and\newline 
    $\Ext(\mathbb{T}(G,\mathcal{T}))\subseteq
    \Ext(\mathbb{I}_{\mathcal{P}}(G,\mathcal{T}))$
  \item $I_{\mathbb{T}_{\mathcal{P}}(G,\mathcal{T})}\subseteq
    I_{\mathbb{I}_{\mathcal{P}}(G,\mathcal{T})}$,
  \item the object extents of $\mathbb{N}(G,\mathcal{T}),
    \mathbb{T}(G,\mathcal{T})$ and $
    \mathbb{T}_{\mathcal{P}}(G,\mathcal{T})$ are equal,
    
  \item and if the value domains of the attributes $M$ are linearly
    ordered we also find that the object concepts of
    $\mathbb{I}(\mathbb{D})$  are the atoms of its respective concept
    lattice and that   $\Ext(\mathbb{I}_{\mathcal{P}}(G,\mathcal{T}))\subseteq
  \Ext(\mathbb{I}(\mathbb{D}))$ holds. 
  \end{enumerate}
\end{proposition}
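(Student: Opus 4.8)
The plan is to isolate a single combinatorial identity that drives items (ii)--(iv), and then to settle items (i) and (v) with one common lattice-theoretic principle. The engine is the identity announced before the proposition: for every node $n\in\mathcal{T}$,
\[
 n^{I_{\mathbb{T}(G,\mathcal{T})}}=\{\phi(m)\mid m\geq_{\mathcal{T}} n\}^{I_{\mathbb{T}_{\mathcal{P}}(G,\mathcal{T})}}
\]
(abbreviating the incidence subscripts to $I_{\mathbb{T}},I_{\mathbb{T}_{\mathcal{P}}},I_{\mathbb{I}_{\mathcal{P}}}$ hereafter). First I would prove this. The inclusion $\subseteq$ is immediate: if the decision path of $g$ passes through $n$ it contains the whole chain $\{m\mid m\geq_{\mathcal{T}} n\}$ up to the root, so every $\phi(m)$ with $m\geq n$ is annotated on that path and hence incident with $g$ in $\mathbb{T}_{\mathcal{P}}$. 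The reverse inclusion is the crux and the one genuine obstacle. If $g$ is incident in $\mathbb{T}_{\mathcal{P}}$ with every $\phi(m)$, $m\geq n$, then $g\models\phi(m)$ for each such $m$, since incidence in $\mathbb{T}_{\mathcal{P}}$ means the predicate is annotated on, hence satisfied along, the decision path of $g$. Because the child towards $n$ at each node of the root-to-$n$ chain is characterised by exactly that predicate, satisfying all of them forces the decision path of $g$ to descend precisely to $n$; here completeness guarantees a defined branch at every test and the negated-sibling annotation turns ``satisfies the predicate'' into ``takes that branch''.

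Given the identity, items (ii)--(iv) fall out quickly. Item (iii) is trivial: incidence in $\mathbb{T}_{\mathcal{P}}$ implies $g\models P$, i.e. incidence in $\mathbb{I}_{\mathcal{P}}$. For (iv) a direct computation gives $\{g\}''=$ the set of objects classified by the leaf $l_g$ both in $\mathbb{N}$ (where $\{g\}'=\{l_g\}$) and in $\mathbb{T}$ (where the path of $g$ lies in the path of $h$ iff $h$ shares $l_g$); applying the identity with $n=l_g$ yields the same for $\mathbb{T}_{\mathcal{P}}$, since $\{g\}'_{\mathbb{T}_{\mathcal{P}}}=\{\phi(m)\mid m\geq l_g\}$ closes back to $l_g^{I_{\mathbb{T}}}$. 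For (ii) I record that the extents of $\mathbb{T}$ are exactly the node-extents $n^{I_{\mathbb{T}}}$ together with $\emptyset$ (a nonempty intersection of node-extents is the subtree of the deepest node), and those of $\mathbb{N}$ are $\emptyset,G$ and the leaf-classes, which are node-extents; the identity exhibits each $n^{I_{\mathbb{T}}}$ as an extent of $\mathbb{T}_{\mathcal{P}}$, and the analogous reading $\{\phi(m)\mid m\geq n\}^{I_{\mathbb{I}_{\mathcal{P}}}}=$ ``objects satisfying the path predicates'' $=n^{I_{\mathbb{T}}}$ exhibits it as an extent of $\mathbb{I}_{\mathcal{P}}$. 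That $\emptyset$ is an extent of both predicate views follows because $\mathcal{P}(\mathcal{T})$ contains a complementary pair $P,\neg P$, which no object jointly satisfies.

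For item (i) I would apply one principle in both directions. Every atom is an object concept: an atom lies strictly above the bottom, whose extent is empty in each of these contexts (no object satisfies a complementary pair, and none reaches every leaf); since object concepts are join-dense, any object $g$ in the atom's extent gives an object concept $(\{g\}'',\{g\}')$ between bottom and the atom, so it equals the atom. Conversely every object concept is an atom once each object extent is a minimal nonempty extent. For $\mathbb{N},\mathbb{T},\mathbb{T}_{\mathcal{P}}$ this is exactly (iv): the object extents are the leaf-classes, nonempty by leaf-support and pairwise disjoint, and no nonempty extent sits strictly below a leaf-class $C$ because any $h$ in it already has $\{h\}''=C$. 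For $\mathbb{I}_{\mathcal{P}}$ I use counting instead: each complementary pair contributes exactly one satisfied predicate per object, so all intents $\{g\}'$ have equal cardinality, form an antichain, and $\{g\}'\subseteq\{h\}'$ forces equality, again making the object extents minimal.

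Finally, item (v) reuses the two tools. The counting argument shows that in $\mathbb{I}(\mathbb{D})$ each object satisfies, per attribute $m$ whose value set has size $k$, exactly $k+1$ of its $2k$ interordinal predicates (an up-set and a down-set meeting at its value); completeness makes the total constant across objects, the intents form an antichain, the bottom extent is empty, and the atom characterisation goes through verbatim. The inclusion $\Ext(\mathbb{I}_{\mathcal{P}}(G,\mathcal{T}))\subseteq\Ext(\mathbb{I}(\mathbb{D}))$ is the standard fact that restricting a context's attributes to $\mathcal{P}(\mathcal{T})\subseteq N$ (same objects, restricted incidence) turns every sub-context extent $B^{I}$, $B\subseteq\mathcal{P}(\mathcal{T})$, into an extent of the larger context, since the derivation of $B$ is computed identically in both. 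Everything outside the reverse inclusion of the driving identity is bookkeeping around it and around the constant-cardinality observation.
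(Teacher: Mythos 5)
Your proof is correct and its overall architecture matches the paper's: both hinge on the identity $n^{I_{\mathbb{T}(G,\mathcal{T})}}=\{\phi(m)\mid m\geq n\}^{I_{\mathbb{T}_{\mathcal{P}}(G,\mathcal{T})}}$, on the observation that $\mathbb{N}$ and $\mathbb{I}_{\mathcal{P}}$ are attribute-induced subcontexts (of $\mathbb{T}$ and $\mathbb{I}(\mathbb{D})$ respectively), and on showing object extents are minimal nonempty extents. Two of your sub-arguments genuinely diverge, and in both cases favorably. First, for the reverse inclusion of the key identity the paper argues that a root-to-$n$ path is ``uniquely identified by the annotated predicates'' because a predicate annotated twice would create an unsupported leaf --- an argument that leans on the leaf-support hypothesis and is somewhat delicate; you instead route through semantics (incidence in $\mathbb{T}_{\mathcal{P}}$ implies $g\models\phi(m)$, and satisfying every predicate on the root-to-$n$ chain forces the decision path down that chain, one negated-sibling branching at a time), which is insensitive to repeated annotations and needs only completeness. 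Second, for atomicity of the interordinal views you use a counting argument (closure of $\mathcal{P}(\mathcal{T})$ under negation plus completeness makes all object intents equicardinal, hence an antichain), and you actually prove the ``known fact'' in (v) the same way, whereas the paper argues atomicity of $\mathbb{I}_{\mathcal{P}}$ directly from a complementary pair $P,\neg P$ and simply cites the $\mathbb{I}(\mathbb{D})$ case. The paper's negation argument is marginally more local; your counting argument is reusable and settles (v) without an external reference. One small point of care: your antichain argument shows object intents are pairwise incomparable, and you correctly supplement it with the observation that the bottom extent is empty before concluding the object concepts are exactly the atoms --- that supplement is needed and should not be dropped.
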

\begin{proof}
  \begin{inparaenum}[i)]
  \item For nominal scales it is a known fact the set of atoms is
    comprised of object concepts. Since the introduced leaf view is
    of nominal scale we can infer the statement to be true.

    In case of the tree view $\mathbb{T}(G,\mathcal{T})$ we know that
    $\{g\}^{J}$ is equal to the set of nodes from $\mathcal{T}$ that
    are on the (complete) decision path of $g$. If we assume that
    there is a formal concept below $(\{g\}^{JJ},\{g\}^{J})$, then
    there must exist a $h\in G$ with $\{g\}^{J}\subseteq \{h\}^{J}$
    and $\{g\}^{J}\neq \{h\}^{J}$. However, this would imply that the
    decision path of $g$ can be extended, which is a contradiction. We
    may note that in case of an incomplete many-valued context
    $\mathbb{D}$, this argument does not hold.

    For the tree predicate view we can apply the same argument.

    For the interordinal predicate view, assume there are two objects
    $g,h\in G$ with $\{g\}'\subseteq \{h\}'$ and $\{g\}'\neq\{h\}'$,
    then $\{h\}'$ is not an atom in $\BV(\mathbb{I}_{\mathcal{P}}(G,
    \mathcal{T}))$. Note, the derivation $(\cdot)'$ is taken with
    respect to $\mathbb{I}_{\mathcal{P}}(G, \mathcal{T})$.  Hence,
    there is a predicate $P\in \{h\}'$ with $h\models P$ and $g\not\models
    P$. Due to the completeness that arises from the complete
    mv-context, we can infer that $g\models\neg P$. Therefore, $\neg
    P\in\{g\}'$, which contradicts the assumption.

  \item The set of leaf nodes $\mathcal{L}(\mathcal{T})$ is a subset
    of all nodes in $\mathcal{T}$. From this fact we can infer that
    $\mathbb{N}(G,\mathcal{T})$ is an attribute induced subcontext of
    $\mathbb{T}(G, \mathcal{T})$ and thus
    $\Ext(\mathbb{N}(G,\mathcal{T}))\subseteq
    \Ext(\mathbb{T}(G,\mathcal{T}))$, see context
    apposition~\cite{fca-book}.

    Next we want to show $\Ext(\mathbb{T}(G,\mathcal{T}))\subseteq
    \Ext(\mathbb{T}_{\mathcal{P}}(G,\mathcal{T}))$. For a nonempty
    extent $A\subseteq G$ of $\mathbb{T}(G,\mathcal{T})$ we know its
    derivation in said view is a path in the decision tree from the
    root up to some node $n$.  For a decision tree, this path is
    uniquely identified by the annotated predicates, since the
    predicates $\mathcal{Q}\subseteq\mathcal{P}(\mathcal{T})$ of the
    split in the root node cannot be annotated twice. Otherwise, this
    would lead to an unsupported leaf, which contradicts the
    requirements of the proposition. The derivation of $\mathcal{Q}$
    in $\mathbb{T}_{\mathcal{P}}(G, \mathcal{T})$ is equal to $A$,
    since all objects that pass through node $n$ are exactly those
    that are a model of $Q$. It remains to be shown that the empty set
    is an extent on both views, which follow from i) and the fact
    that the decision tree $\mathcal{T}$ has at least one
    split.\footnote{We made this requirement for all decision trees
      that are considered in this work in~\cref{sec:formalities}.}
    Thus $\Ext(\mathbb{T}(G,\mathcal{T}))\subseteq
    \Ext(\mathbb{T}_{\mathcal{P}}(G,\mathcal{T}))$.

    The same argument can be applied to
    $\mathbb{I}_{\mathcal{P}}(G,\mathcal{T})$.

    For any path from the root node to some other node
    $n\in\mathcal{T}$ let $\mathcal{Q}\subseteq
    \mathcal{P}(\mathcal{T})$ be the set of annotated predicates. Then
    the derivation of $\mathcal{Q}$ in
    $\mathbb{T}_{\mathcal{P}}(G,\mathcal{T})$ is equal to the
    derivation of $\mathcal{Q}$ in
    $\mathbb{I}_{\mathcal{P}}(G,\mathcal{T})$. This is true since the
    set of objects that passes through the node $n$, i.e.,
    $\mathcal{Q}^{I_{\mathbb{T}_{\mathcal{P}}(G,\mathcal{T})}}$, is
    given by the set of objects that models $\mathcal{Q}$, i.e.,
    $\mathcal{Q}^{I_{\mathbb{I}_{\mathcal{P}}(G,\mathcal{T})}}$. Thus
    $\Ext(\mathbb{T}(G,\mathcal{T}))\subseteq
    \Ext(\mathbb{I}_{\mathcal{P}}(G,\mathcal{T}))$.

  \item Follows directly from their definitions.

  \item Any object extent $A\subseteq G$ of
    $\mathbb{N}(G,\mathcal{T})$ has the property that there is a
    unique leaf node $l\in\mathcal{T}$ such that $A$ is the set of
    objects that is classified by $l$.

    From i) we can infer that the object extents of
    $\mathbb{T}_{\mathcal{P}}(G,\mathcal{T})$ are the atoms in the
    concept lattice
    $\BV(\mathbb{T}_{\mathcal{P}}(G,\mathcal{T}))$. For any object
    extent $A$ of $\mathbb{N}(G,\mathcal{T})$, let $g\in G$ be a
    generator of $A$, i.e.,
    $\{g\}^{I_{\mathbb{N}(G,\mathcal{T})}I_{\mathbb{N}(G,\mathcal{T})}}=A$.

    We can find the associated set of predicates in the tree predicate
    view by computing the derivation of $g$ in
    $\mathbb{T}_{\mathcal{P}}(G,\mathcal{T})$. In detail, we can
    compute $\{g\}^{I_{\mathbb{T}_{\mathcal{P}}(G,\mathcal{T})}}$ by
    projecting
    \[\{(g,n)\mid n\ \text{is on the dec.  path of}\ g
      \}\circ\{(n,P)\mid \exists m\in\mathcal{T}:
      n\leq_{\mathcal{T}}m\ \text{and}\ \phi(m)=P\}\] on the second
    element. This set is equal to the set of predicates annotated to
    the decision path of $g$. The second application of the derivation
    operation in $\mathbb{T}_{\mathcal{P}}(G,\mathcal{T})$ yields the
    set of objects that are model of
    $\{g\}^{I_{\mathbb{T}_{\mathcal{P}}(G,\mathcal{T})}}$. Hence, all
    elements of
    $\{g\}^{I_{\mathbb{T}_{\mathcal{P}}(G,\mathcal{T})}I_{\mathbb{T}_{\mathcal{P}}(G,\mathcal{T})}}$
      are classified by the same leaf as $g$. Thus, the object extents
      of $\mathbb{N}(G,\mathcal{T})$ and
      $\mathbb{T}_{\mathcal{P}}(G,\mathcal{T})$ are equal. The rest of
      the statements follows directly from ii).

    \item  It is a known fact for interordinally scaled complete many-valued
    contexts that the object concepts are the atoms of the respective
    concept lattice.
    
    Furthermore, for linear ordered value domains we find that
    $\mathcal{P}(\mathcal{T})\subseteq N$ where
    $N\coloneqq\{m:\;\leq v\mid m\in M,v\in m^{\mathbb{D}}\}
    \cup\{m:\;\geq v\mid m\in M,v\in m^{\mathbb{D}}\}$, we can infer
    that $\mathbb{I}_{\mathcal{P}}(G, \mathcal{T})$ is an attribute
    induced sub-context of $\mathbb{I}(\mathbb{D})$. Hence, the
    statement holds.

    \end{inparaenum}\qed
\end{proof}

From this proposition we can draw essential consequences for the
conceptual interpretation of (or view on) tree classifiers. From ii)
we can infer that the extent structure is entailed in both the extent
structure of the tree predicate view and the interordinal predicate
view. Hence, the whole decision tree structure is captured by both
views. To demonstrate this within the scope of our running example,
we depicted the decision tree within the tree predicate view of the
training data $G$ in~\cref{fig:tennis-pred} (right). In addition to
the decision tree structure of $\mathcal{T}$, we can observe in
$\mathbb{T}_{\mathcal{P}}(G,\mathcal{T})$ multiple predicate
combinations that span across different tree branches. This
theoretical finding leads to several approaches for the interpretation
of decision trees. We want to introduce and discuss these using our example. 
\begin{description}
\item[Alternative Leaf Descriptions:] Our method can generate
  alternative descriptions for leaf nodes in the predicate language of
  $\mathcal{T}$. For example, the leaf $l$ that classifies object 13
  in~\cref{fig:tennis-pred} contains the predicates
  \textit{humidity$\geq$high, overlook$\leq$overcast,
    overlook$\leq$rainy, windy}. This leaf has an upper neighbor within
  the concept lattice of the tree predicate view having the attribute
  \textit{windy}. There is no node within $\mathcal{T}$ representing
  this concept. However, we can use this concept to construct an
  alternative combination of predicates that generates the concept of
  $l$. The leaf $\l$ can be represented by the meet
  $\bigwedge\{\textit{windy, humidity$\geq$high}\}$. This is in fact
  a \emph{minimal generator} for the intent of the concept associated
  to $l$.

  In order to interpret a given decision tree $\mathcal{T}$, one can
  generate all minimal generators for all leaf concepts, and use
  these as shorter descriptions to comprehend the classification
  structure of the tree. The thereby obtained shorter explanations are
  potentially more comprehensible. This is in particular useful, when
  decision trees are large, for example, when trained on large data
  sets having a many attributes. 
\item[Explaining Leaf Sets:] For any set of leafs
  $L\subseteq\mathcal{T}$, we can compute in the tree predicate view
  their (conceptual) join $(A,B)\coloneqq\bigvee L$. The formal
  concept $(A,B)\in\BV(\mathbb{T}_{\mathcal{P}}(G,\mathcal{T}))$ is
  not necessarily associated to a node of $\mathcal{T}$, for example,
  take the join of the leafs having objects 13 and 5
  in~\cref{fig:tennis-pred}. From this we learn that both leafs share
  the predicate \textit{windy}. When we follow the lattice towards
  the top concept, we find that the leafs 13 and 5 also share the
  predicate \textit{overlook$\leq$rainy}. In contrast, within
  $\mathcal{T}$, the decision paths of both leafs have only the root
  node in common. Hence, our method is capable of expressing
  commonalities of the set of leafs $L$, that are inexpressible within
  the structure of $\mathcal{T}$.   
\item[Control for Missing Data:]
  The alternate descriptions from the previous two items allow for
  coping with missing attributes. For example, when classifying an
  object $g\in \check G$ that has no value for the attribute
  \textit{overlook}, the predicate tree view can map $g$ to the leaf
  having object 13 using the attributes \textit{windy} and
  \textit{humidity}, as discussed in the previous items. 
\item[Global Influence of a Predicate:] A common method for
  interpreting and explaining decision trees is to identify attributes
  that are used first or second in the tree. Yet, as the tree
  predicate view reveals, there are other structurally important
  predicates, e.g., \textit{overlook$\leq$rainy} and
  \textit{overlook$\geq$overcast}, as there is no upper neighbor for the
  associated concepts besides the root node. We say a formal concept
  $(A,B)$  \emph{is dominated by} another concept $(C,D)$ iff $(A,B)\leq
  (C,D)$. Based on this notion, we can say that a predicate $P$ is
  dominated by another predicate $Q$ iff the attribute concept of $P$
  is a lower neighbor of the attribute concept of $Q$, i.e.,
  ${P}^{J}\subseteq {Q}^{J}$.
\item[Leaf Coverage of Predicates:] Another measure of importance for
  an predicate $P$ within a decision tree $\mathcal{T}$ is the number
  of leafs that $P$ is involved with. It is not surprising that a
  predicate which is used first in the decision tree will be involved in many
  leaves. However, as we can infer from the lattice diagram
  in~\cref{fig:tennis-pred}, the predicate
  \textit{overlook$\leq$rainy} is involved in four leaves while its
  predecessor in the tree is only involved in three leaves. Hence, the
  conceptual view on $\mathcal{T}$ allows for structurally identifying
  important predicates. Moreover, one may easily select a subset of
  predicates that covers all leafs of a tree $\mathcal{T}$. 
\end{description}

The methodology just presented for the analysis and interpretation of
decision trees can be applied to the interordinal tree view in an
analogous way, as~\cref{lamma-scales} ii) points out. In general,
given the set inclusion on the extent sets, one can consider the
views $\mathbb{N}(G,\mathcal{T}),\mathbb{T}(G,\mathcal{T})$ as
coarse, the views
$\mathbb{T}_{\mathcal{P}}(G,\mathcal{T}),\mathbb{I}_{\mathcal{P}}(G,\mathcal{T})$
as intermediate, and the view $\mathbb{I}(\mathbb{D})$ as a fine
scaling of $\mathcal{T}$.

\subsection{Tree Ensemble Views}
\label{sec:ensscale}
In the last section, we introduced all tree views in a common
language. This enables us to compare their different views on a
decision tree, as we have seen. This comparability provides the
cornerstone for a comprehensive approach to the interpretation of tree
ensembles. In particular, we present a principle approach for the
interpretation of families of trees, as they are used in common
supervised machine learning procedures, such as Random Forests or
AdaBoost. In the following we will use the notation $\Scon(\check G,
\mathcal{T})$ as a general name for any of the views introduced in
the last sections.

\begin{definition}[Forest View]\label{def:union-view}
  For a mv-context $\mathbb{D}\coloneqq (G,M,W,I)$, a family of decision
  trees $\mathfrak{T}=(\text{T}_i)_{i\in F}$ that were trained on
  $\mathbb{D}$, and a conceptual view for each tree $\Scon(\check G,
  \text{T}_i)$, we define their \emph{forest view} to be
\[\Scon(G, \mathfrak{T})\coloneqq\bigcup_{i\in F}\Scon( G, \text{T}_i), \text{where }
  \Scon_1\cup\Scon_2\coloneqq (G_1\cup G_2,M_1\cup M_2,I_1\cup I_2).\]
Likewise, this view can be applied to previously unknown data
$\check G$.
\end{definition}

We want to elaborate on the reasoning behind this definition. First we
may note, that for our views the sets $G_{1}=G_{2}= G$, hence,
their union results in $G$. This is by design, since we want
our method to interpret a forest given a particular set of objects
$G$, analogously to the decision tree views.
In contrast, the
attribute sets and incidences can be different. Hence, the ability of
the forest view to provide an interpretation is directly connected to
the union of all attributes from the set of tree views.

\subsubsection*{Random Forests} are constructed using two essential
techniques, \emph{bagging} and an empirical variant of
\emph{boosting}. The term bagging, also known as bootstrap
aggregation, describes the procedure to draw samples uniformly from
the training data set, with replacement. The common approach is to
sample for any tree in an ensemble its own training data
set~\cite{breiman2001random}. In the language of our formal contexts,
each tree is constructed using a random induced subcontext
$\mathbb{H}\leq\mathbb{D}$. This modeling does not take into account the
possibility that the same object can be drawn twice, however, this can
obviously dealt with by creating copies of objects.
In the following we to discuss the influence of the different tree
views on the forest view.

\begin{description}
\item[Leaf View:] In this view the attribute set is comprised of the
  set of leafs $\mathcal{L}(\mathcal{T})$ for any tree $\mathcal{T}$
  in the ensemble $\mathfrak{T}$. For any two trees $\mathcal{T}_{1}$
  and $\mathcal{T}_{2}$, we consider their leaf sets
  $\mathcal{L}((\mathcal{T}_{1}))$ and $\mathcal{L}(\mathcal{T}_{2})$
  to be disjoint sets, i.e.,
  $\mathcal{L}(\mathcal{T}_{1})\cap\mathcal{L}(\mathcal{T}_{2})=\emptyset$.

  This implies that any forest view whose trees are the leaf scaled
  is equal to the context apposition of the set of leaf views
  $\mid_{i\in F} \Scon(\check G,\mathcal{T}_{i})=\bigcup_{i\in
    F}\Scon(\check G, \mathcal{T}_{i})$. From the context apposition
  we can deduce that the set of its extents, i.e.,  $\Ext(\mid_{i\in F}
  \Scon(\check G,\mathcal{T}_{i}))$,  is equal to the set of
  intersections of all subsets of extents from all tree views, i.e.,
  $\{\bigcap \mathcal{A}\mid \mathcal{A}\subseteq \bigcup_{i\in
    F}\Ext(\Scon(\check G, \mathcal{T}_{i}))\}$.
\item[Tree View:]
  Analogously to the modeling by the leaf view, we consider the nodes
  for any two trees $\mathcal{T}_{1},\mathcal{T}_{2}$ to be disjoint,
  i.e., $\mathcal{T}_{1}\cap \mathcal{T}_{2}=\emptyset$. Hence, the
  forest view is equal to the apposition of all tree views, having
  the same consequence on its extents as shown in the last item.  
\item[Tree Predicate View:] Forest views that are based on the tree
  predicate view are more complicated than the previous two. For
  example, given two trees $\mathcal{T}_{1},\mathcal{T}_{2}$, an
  object $g\in \check G$ and a predicate
  $P\in\mathcal{P}(\mathcal{T}_{1})\cap\mathcal{P}(\mathcal{T}_{2})$,
  the case may arise that $(g,P)\in
  I_{\mathbb{T}_{\mathcal{P}}(\check G,\mathcal{T}_{1})}$ but
  $(g,P)\not\in I_{\mathbb{T}_{\mathcal{P}}(\check
    G,\mathcal{T}_{2})}$. Therefore, the tree predicate view based
  forest view is not a simple apposition of its individual tree
  predicate views. Hence, it is possible that this forest view can
  come up with extents that were not simple intersection of already
  known extents. An advantage of employing tree predicate views
  for constructing the forest view is that the resulting
  representation is smaller. This is due to the fact that any two
  trees might share predicates, i.e.,
  $\mathcal{P}(\mathcal{T}_{1})\cap\mathcal{P}(\mathcal{T}_{2})\neq\emptyset$,
  and in particular potentially large.
\item[Interordinal Predicate View:] In contrast to the last view, we
  can state for the interordinal predicate view on Random Forests that
  given two trees $\mathcal{T}_{1},\mathcal{T}_{2}$, an object $g\in
  \check G$ and a predicate
  $P\in\mathcal{P}(\mathcal{T}_{1})\cap\mathcal{P}(\mathcal{T}_{2})$
  we find $(g,P)\in I_{\mathbb{I}_{\mathcal{P}}(\check
    G,\mathcal{T}_{1})}$ iff $(g,P)\in
  I_{\mathbb{I}_{\mathcal{P}}(\check G,\mathcal{T}_{2})}$. From this
  we can infer that the forest view is almost the apposition of the
  set of interordinal predicate views $\mathbb{I}_{\mathcal{P}}(\check
  G,\mathcal{T}_{i})$, with the exception that any predicate $P$ that
  occurs in more than one view is not duplicated by coloring. Since
  clarification of attributes, i.e., the removal of duplicate
  attributes in a formal context, does not affect the set of extents, we
  can apply the same reasoning as shown for the leaf view.
\end{description}

The views just introduced open up a variety of practical applications
for the explainability of tree ensembles, which we will study in more
detail in~\cref{sec:post-scaling}.

\section{Dealing with Large Conceptual Views}\label{sec:post-scaling}
We discussed in \cref{sec:predscale,sec:ensscale} the utility and
applicability of the different conceptual views. Yet, for many
examples of real-world sized data these views are potentially
incomprehensibly large.
Thus, in order to derive human-comprehensible selections and
aggregations of the conceptual views, we introduce the following
methods. These methods are based on common data reduction procedures
for formal contexts, however, adapted for conceptual views.

\subsubsection*{Object or Attribute Selection}
The first class of methods are selection methods to compute induced
subcontext of conceptual views. Selecting a subset of the object set
will result in a coarser closure system \cite{smeasure} on the set of
attributes. A selection of attributes of the contextual view has the same effect
on the objects~\cite{smeasure}. There are numerous ways on how to
select relevant attributes from formal
contexts~\cite{HanikaKS19,DurrschnabelKS21}. In our experiments
(\cref{sec:expstudy}), we employ the feature importance scores that
are provided by the Random Forest models. Furthermore, one may apply
\texttt{KMedoid} clustering to identify representative objects, which
we call \emph{center objects}. The advantage of \texttt{KMedoid}
compared to other popular methods, such as \texttt{kmeans}, is that
the cluster centers are existing objects of the data set. Thus, the
clustering can be interpreted as computing an induced sub-context with
a subset of the original object set.

\subsubsection*{ Structure based Object Selection}
A particular method for selecting objects can be based on the
structural position of an object within the concept lattice.  For a
given object $g\in G$, a natural approach would be to compute the
order filter ${\uparrow}\{g\}^{I_{\Scon}I_{\Scon}}\subseteq \BV(\Scon)$
for a contextual view $\Scon$. This results in a \emph{local
  conceptual view} that allows for deriving explanations for
individual objects.  A second approach additionally includes
neighboring concepts of ${\uparrow}\{g\}^{I_{\Scon}I_{\Scon}}$. The
resulting local conceptual view enables more comprehensive
explanations of the structural position of $g$ within $\mathcal{T}$,
and its dependence from different attribute values. In particular,
this allows for investigations that are comparable to \emph{partial dependence
  plots}~\cite{hastie2009elements}, i.e., it enables the study of
attribute value perturbations.

Neighboring concepts can be added using covering elements of the set
${\uparrow}\{g\}^{I_{\Scon}I_{\Scon}}$, i.e., $\{A\in\BV(\Scon) \mid
A\prec B \vee B \prec A \text{ for } B\in
{\uparrow}\{g\}^{I_{\Scon}I_{\Scon}}\}$. Those elements can be
enumerated recursively using the \texttt{next\_neighbor}
algorithm~\cite{next-neighbor}.

\subsubsection*{Concept Selection Methods}
To reduce the number of formal concepts, it is common to apply
different criteria for their importance. The FCA literature provides a
multitude of measures~\cite{measures}. In our experimental
work, we select concepts based on their support~\cite{titanic} (TITANIC), i.e.,
the number of objects that are contained in an extent divided by the
number of all objects.  This procedure results in a subset of the
concepts of a conceptual view. This set constitutes a
join-semilattice, i.e., the \emph{iceberg concept lattice}.

\subsubsection*{Composition Methods}
Another approach is to split a conceptual view into multiple parts
based on a given partition of the object or attribute set. The
original concepts of a conceptual view can be retrieved from the
individual parts by combining them using the meet and joins
operations. Reasonable partitions of the object set can be derived
using their class labels. Hence, from this one can compute a drawing
per class label. A meaningful choice for a partition of the attributes
is to draw on their semantics. For example, employing ontological
background knowledge. Furthermore, one may restrict a view to a
particular \emph{order direction} of the threshold values, i.e.,
$\leq$ and $\geq$. This procedure can be considered as an ordinal
factors with respect to the context apposition
operation~\cite{fca-book}.

\subsubsection*{Attribute Aggregation} 
A reason for why the number of concepts of a conceptual view gets
large is the number of different predicates derived during the
training.  Aggregating different predicates by clustering them may
lead to a significant reduction in the number of concepts. For this,
one should account for the different attribute value distributions on
which the predicates are based on. A clustering using \emph{grades} as
aggregated values, e.g., \emph{low}$\leq$\emph{med}$\leq$\emph{high},
can be especially comprehensible to human readers.

\section{Experimental Study}
\label{sec:expstudy}
The following experiments shall support our theoretical findings with
respect to two practical research questions. First, is the size of
conceptual views manageable with respect to human-comprehensibility
and to what extent depends its size on the choice of hyper
parameters of the tree training algorithm? Second, are explanations
derived from conceptual views meaningful for human-understanding?

To answer these questions we conduct two experiments using Random
Forests. As for a data set, we choose the well-known car data
set~\cite[ID:991]{OpenML2020}, which is comprised of 1728 objects on
seven (many-valued) attributes. This dataset presents a binary
classification problem, using the class labels \texttt{positive} and
\texttt{negative}.

\subsection{Sizes of Conceptual Views: a Parameter Study}
We investigate the first research question by means of a parameter
study. The two most important hyper parameters of the Random Forest
procedure are the \emph{number of trees} (\texttt{nt}) and their
\emph{maximal depth} (\texttt{md}). Other parameters, such as
attributes per tree, purity, split criterion, etc, also have a
significant influence, however, not on the size of the resulting
conceptual view.

For our study, we trained different Random Forest classifiers using
$2\leq\mathtt{nt}\leq 10$ and $2\leq\mathtt{md}\leq 20$. We completed
ten runs for each parameter combination, using ten different initial
random seeds. In \cref{fig:ps-acc} (left) we report the classification
performance using the average accuracy and in \cref{fig:ps-acc}
(right) the generalization error. The latter is comprised of
subtracting the accuracy on the test data set from the accuracy that
was achieved on the train data set, i.e.,
$\text{Error}_{\text{Generalization}}\coloneqq
\text{ACC}_{\text{Train}}-\text{ACC}_{\text{Test}}$. This value allows
us to estimate the amount to which our trained Random Forest
classifier is prone to overfitting. In all our experiments, we
conducted four fold cross-validation, however, we observed stable
results.

\begin{figure}
  \centering
  \includegraphics[width=\textwidth]{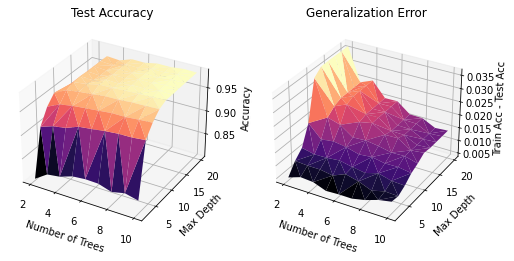}
  \caption{Visualization of distribution of the performance measures accuracy (ACC)
    and generalization error, of trained Random Forest classifiers
    using the hyper parameters \texttt{md} and \texttt{nt}. Mean
    values are reported.}
  \label{fig:ps-acc}
\end{figure}

\subsubsection*{Observations}
From a supervised-learning point of view, we notice that for the model
accuracy the maximum depth \texttt{md} has a greater impact than the
number of trees \texttt{nt}. However, based on the generalization
error (\cref{fig:ps-acc}, right), we find that the number of trees is
instrumental to prevent overfitting. At this point, we feel justified
in stating that very good classifiers exist for \texttt{nt}$\geq8$ and
\texttt{md}$\geq15$. With that, we can turn to the conceptual views
and their capabilities of explaining the Random Forest classifiers. 

For this, we first examine the influence of the parameters on the
number of concepts of each respective conceptual view. In this
experiment, the Random Forest classifier are trained on the entire
data set where no cross-validation was applied. Afterwards, we computed
the different views $\Scon(G, \mathfrak{T})$, also using the
entire data set, and depicted the number of formal concepts per view
and parameter combination in~\cref{fig:ps-scale-sizes}.

\begin{figure}
  \centering
  \begin{tikzpicture}
    \node[] at (0,0){\includegraphics[width=\textwidth]{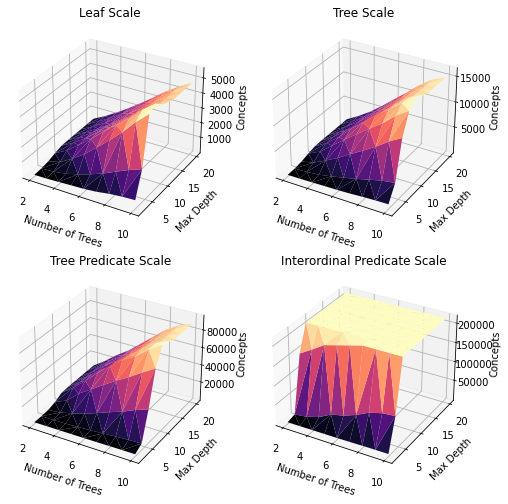}};
    \node[draw=white,fill=white] at (-3.5,5.7){\textbf{Leaf View}};
    \node[draw=white,fill=white] at (2.7,5.7){\textbf{Tree View}};
    \node[draw=white,fill=white] at (-3.5,-0.3){\textbf{Tree Predicate View}};
    \node[draw=white,fill=white] at (3,-0.3){\textbf{Interordinal Predicate View}};
  \end{tikzpicture}
  \caption{Distribution of the number of formal concepts for different
    hyper parameter combinations and different conceptual views: Leaf
    view (top left), tree view (top right), tree predicate view
    (bottom left), and interordinal predicate view (bottom right).}
  \label{fig:ps-scale-sizes}
\end{figure}

\subsubsection*{Observations}

First, we notice that the visual shapes for three different conceptual
views are similar to some extent. The exception is the interordinal
predicate view, which increases more quickly with increasing
\texttt{md}. This observation is expected, since the occurrence of
split predicated increases with the depth of the trees, and, in
contrast to the tree predicate view, the object-predicate incidences
are independent of the location of said predicates in the trees.
For all plots we can report the increasing the depth beyond ten has no
noticeable impact. The performance measurements in \cref{fig:ps-acc}
behaved analogously, yet, since we applied different training sets, we
should refrain from a direct comparison.

In terms of the absolute number of formal concepts, we find that the
leaf view generates the smallest amount (5000), followed by the
tree view (15,000), the tree predicate view (80,000) and the
interordinal predicate view (200,000). This observation is expected
due to our theoretical findings in \cref{dtree-scales}. Obviously, due
to the observed number of formal concepts, all views elude from a
direct human-comprehension. Hence, consecutive data reduction methods,
as proposed in~\cref{sec:post-scaling}, are required.

\subsection{Deriving Meaningful Conceptual Explanations}

\begin{figure}
  \centering
  \colorlet{mivertexcolor}{black!80}
\colorlet{jivertexcolor}{black!80}
\colorlet{vertexcolor}{black!80}
\colorlet{bordercolor}{black!80}
\colorlet{linecolor}{gray}
\tikzset{vertexbase/.style 2 args={semithick, shape=circle, inner sep=2pt, outer sep=0pt, draw=bordercolor},%
  vertex/.style 2 args={vertexbase={#1}{}, fill=vertexcolor!45},%
  mivertex/.style 2 args={vertexbase={#1}{}, fill=mivertexcolor!45},%
  jivertex/.style 2 args={vertexbase={#1}{}, fill=jivertexcolor!45},%
  divertex/.style 2 args={vertexbase={#1}{}, top color=mivertexcolor!45, bottom color=jivertexcolor!45},%
  conn/.style={-, thick, color=linecolor}%
}
\tikzstyle{a} = [text width=0.8cm,font=\tiny\linespread{-1}\selectfont]
\tikzstyle{r3} = [label distance=4mm]
\tikzstyle{r2} = [label distance=3mm]
\tikzstyle{rr} = [text width=2cm,xshift=0.6cm]
\tikzstyle{ll} = [text width=2cm,xshift=-0.4cm]
\tikzstyle{d} = [text width=2cm, align=center]
\begin{tikzpicture}[scale=0.1,font=\tiny\linespread{0}, label distance = 1mm]
  \begin{scope} 
    \begin{scope} 
      \foreach \nodename/\nodetype/\param/\xpos/\ypos in {%
        0/vertex//81.81818181818183/0.0,
        1/vertex//61.01058710298367/0.29355149181904494,
        2/vertex//38.35418671799808/1.1405197305100927,
        3/vertex//54.34071222329165/23.055822906641012,
        4/vertex//29.143407122232922/30.043310875842167,
        5/vertex//60.48123195380176/32.26660250240617,
        6/vertex//52.01154956689127/33.00769971126084,
        7/vertex//67.57459095283932/40.524542829643906,
        8/vertex//47.141482194417726/41.37151106833496,
        9/vertex//26.390760346487006/42.21847930702601,
        10/vertex//93.40712223291632/44.75938402309916,
        11/vertex//66.5158806544755/50.68816169393651,
        12/vertex//8.075072184793058/55.66410009624642,
        13/vertex//125.59191530317622/55.66410009624642,
        14/vertex//19.826756496631376/56.19345524542833,
        15/vertex//34.75457170356112/56.934552454283,
        16/vertex//110.76997112608285/57.88739172281043,
        17/vertex//96.68912415784413/58.52261790182872,
        18/vertex//54.76419634263717/58.734359961501475,
        19/vertex//86.94898941289708/58.734359961501475,
        20/vertex//72.33878729547646/59.05197305101062,
        21/vertex//59.63426371511071/72.0741097208855,
        22/vertex//16.33301251203079/73.2386910490857,
        23/vertex//106.95861405197311/73.97978825794037,
        24/vertex//50.00000000000001/100.0
      } \node[\nodetype={\param}{}] (\nodename) at (\xpos, \ypos) {};
    \end{scope}
    \begin{scope} 
      \path (9) edge[conn] (15);
      \path (2) edge[conn] (17);
      \path (0) edge[conn] (18);
      \path (4) edge[conn] (12);
      \path (20) edge[conn] (24);
      \path (0) edge[conn] (13);
      \path (2) edge[conn] (12);
      \path (8) edge[conn] (18);
      \path (22) edge[conn] (24);
      \path (8) edge[conn] (15);
      \path (10) edge[conn] (16);
      \path (7) edge[conn] (18);
      \path (15) edge[conn] (24);
      \path (1) edge[conn] (16);
      \path (14) edge[conn] (24);
      \path (5) edge[conn] (18);
      \path (9) edge[conn] (14);
      \path (10) edge[conn] (17);
      \path (11) edge[conn] (18);
      \path (3) edge[conn] (17);
      \path (5) edge[conn] (17);
      \path (6) edge[conn] (18);
      \path (23) edge[conn] (24);
      \path (13) edge[conn] (24);
      \path (19) edge[conn] (24);
      \path (4) edge[conn] (15);
      \path (7) edge[conn] (19);
      \path (12) edge[conn] (24);
      \path (21) edge[conn] (24);
      \path (11) edge[conn] (20);
      \path (3) edge[conn] (15);
      \path (16) edge[conn] (24);
      \path (18) edge[conn] (24);
      \path (1) edge[conn] (15);
      \path (17) edge[conn] (24);
    \end{scope}
    \begin{scope} 
      \foreach \nodename/\labelpos/\labelopts/\labelcontent in {%
        6/above//{buying $\leq$  med},
        12/above/r3/{buying $\geq$ high},
        13/above/rr/{persons $\leq$  two},
        14/above//{buying $\geq$ med},
        15/above/r2/{maint $\leq$  high},
        16/above//{safety $\leq$  low},
        17/above/r2/{safety $\leq$  med},
        18/above//{buying $\leq$  high},
        19/above//{maint $\geq$ high},
        20/above/r2/{maint $\geq$ med},
        21/above//{persons $\leq$  four},
        22/above//{persons $\geq$ four},
        23/above//{safety $\geq$ med},
        0/below/d/{490, 1520, 1674, 541, 893},
        1/below/d/{300, 1041, 564, 1674, 333},
        2/below/d/{300, 490, 564, 541, 333},
        3/below/d/{300, 564, 1674, 541, 333},
        4/below/d/{188, 300, 564, 541, 333},
        5/below/d/{1344, 490, 564, 1674, 541},
        6/below/ll/{1041,1344,1520, 1674, 893},
        7/below/rr/{1041, 1344, 490, 564, 541, 893},
        8/below/ll/{1041, 564, 1520, 1674, 541},
        9/below/ll/{300, 1041, 564, 541, 333},
        10/below/d/{300, 1344, 564, 1674, 333},
        11/below/d/{1041, 1344, 1520, 541, 893},
        13/below/d/{300},
        14/below/d/{893},
        19/below/d/{188},
        20/below/d/{300},
        21/below/d/{300, 1041, 490, 541, 333},
        22/below/d/{188, 1041, 1344, 564, 333},
        23/below/d/{188, 490, 1520, 541, 893},
        0/right/a/{432P, 0N},
        1/right/a/{432P, 0N},
        2/right/a/{499P, 65N},
        3/right/a/{559P, 193N},
        4/right/a/{468P, 180N},
        5/right/a/{567P, 193N},
        6/right/a/{526P, 338N},
        7/right/a/{458P, 190N},
        8/right/a/{598P, 374N},
        9/right/a/{524P, 302N},
        10/right/a/{464P, 0N},
        11/right/a/{551P, 318N},
        12/right/a/{684P, 180N},
        13/right/a/{576P, 0N},
        14/right/a/{697P, 338N},
        15/right/a/{850P, 446N},
        16/right/a/{576P, 0N},
        17/right/a/{821P, 219N},
        18/right/a/{850P, 446N},
        19/right/a/{674P, 190N},
        20/right/a/{715P, 354N},
        21/right/a/{468P, 264N},
        22/right/a/{634P, 518N},
        23/right/a/{634P, 518N},
        24/right/a/{1210P, 518N}
      } \coordinate[label={[\labelopts]\labelpos:{\labelcontent}}](c) at (\nodename);
    \end{scope}
  \end{scope}
\end{tikzpicture}
  \colorlet{mivertexcolor}{black!80}
\colorlet{jivertexcolor}{black!80}
\colorlet{vertexcolor}{black!80}
\colorlet{bordercolor}{black!80}
\colorlet{linecolor}{gray}
\tikzset{vertexbase/.style 2 args={semithick, shape=circle, inner sep=2pt, outer sep=0pt, draw=bordercolor},%
  vertex/.style 2 args={vertexbase={#1}{}, fill=vertexcolor!45},%
  mivertex/.style 2 args={vertexbase={#1}{}, fill=mivertexcolor!45},%
  jivertex/.style 2 args={vertexbase={#1}{}, fill=jivertexcolor!45},%
  divertex/.style 2 args={vertexbase={#1}{}, top color=mivertexcolor!45, bottom color=jivertexcolor!45},%
  conn/.style={-, thick, color=linecolor}%
}
\tikzstyle{a} = [text width=0.8cm,font=\tiny\linespread{-1}\selectfont]
\tikzstyle{r3} = [label distance=4mm]
\tikzstyle{r2} = [label distance=3mm]
\tikzstyle{rr} = [text width=2cm,xshift=0.6cm]
\tikzstyle{ll} = [text width=2cm,xshift=-0.4cm]
\tikzstyle{d} = [text width=2cm, align=center]
\begin{tikzpicture}[scale=0.1,font=\tiny\linespread{0}, label distance = 1mm]
  \begin{scope} 
    \begin{scope} 
      \foreach \nodename/\nodetype/\param/\xpos/\ypos in {%
        0/mivertex//-21.817954444610233/0.3519697625149121,
        1/vertex//31.366698748796928/0.7170356111645759,
        2/vertex//81.49770129529665/2.1180493478124873,
        3/vertex//-22.81137421134011/21.87606470832884,
        4/vertex//-3.494878747148121/23.311004371383106,
        5/vertex//77.41364225429604/23.531764319545374,
        6/vertex//21.52069297401347/24.855630413859487,
        7/vertex//18.026948989412894/35.01924927815209,
        8/jivertex//72.0050235243223/42.517119861494,
        9/divertex//86.2440401807838/42.958639757818396,
        10/mivertex//91.76303888483866/51.568277736144005,
        11/vertex//23.743984600577484/55.45235803657366,
        12/mivertex//74.32300298002534/58.85335602549637,
        13/vertex//94.96405813319055/61.281715455280604,
        14/mivertex//-25.46049358928643/62.82703509241587,
        15/vertex//33.261652621857216/65.58653444444329,
        16/mivertex//-0.5004812319538203/66.14533205004815,
        17/mivertex//17.146176406017034/67.02147410749754,
        18/mivertex//86.13366020670273/70.00173340768717,
        19/vertex//56.66987487969204/71.75649663137636,
        20/mivertex//106.55395541170566/76.95567177479631,
        21/mivertex//67.58982456107842/80.7085908935536,
        22/mivertex//45.97690086621753/83.4023099133783,
        23/mivertex//9.87487969201154/85.73147256977869,
        24/vertex//33.333333333333336/100.0
      } \node[\nodetype={\param}{}] (\nodename) at (\xpos, \ypos) {};
    \end{scope}
    \begin{scope} 
      \path (1) edge[conn] (21);
      \path (22) edge[conn] (24);
      \path (4) edge[conn] (14);
      \path (2) edge[conn] (20);
      \path (9) edge[conn] (23);
      \path (0) edge[conn] (3);
      \path (17) edge[conn] (23);
      \path (15) edge[conn] (23);
      \path (6) edge[conn] (14);
      \path (11) edge[conn] (17);
      \path (2) edge[conn] (16);
      \path (7) edge[conn] (16);
      \path (15) edge[conn] (22);
      \path (19) edge[conn] (22);
      \path (10) edge[conn] (18);
      \path (1) edge[conn] (3);
      \path (5) edge[conn] (8);
      \path (19) edge[conn] (21);
      \path (18) edge[conn] (22);
      \path (5) edge[conn] (9);
      \path (23) edge[conn] (24);
      \path (13) edge[conn] (20);
      \path (7) edge[conn] (15);
      \path (8) edge[conn] (15);
      \path (14) edge[conn] (23);
      \path (6) edge[conn] (15);
      \path (3) edge[conn] (14);
      \path (21) edge[conn] (24);
      \path (16) edge[conn] (23);
      \path (8) edge[conn] (18);
      \path (11) edge[conn] (15);
      \path (4) edge[conn] (17);
      \path (20) edge[conn] (24);
      \path (13) edge[conn] (18);
      \path (3) edge[conn] (16);
      \path (12) edge[conn] (18);
    \end{scope}
    \begin{scope} 
      \foreach \nodename/\labelpos/\labelopts/\labelcontent in {%
        0/above//{buying $\leq$ low, safety $\geq$ med, maint $\geq$ high},
        9/above//{safety $\geq$ high},
        10/above//{buying $\geq$ high},
        12/above//{maint $\leq$  low},
        14/above//{maint $\geq$ med},
        16/above//{buying $\leq$  med},
        17/above//{persons $\leq$  four},
        18/above//{buying $\geq$ med, maint $\leq$  med},
        20/above//{persons $\geq$ more},
        21/above//{safety $\leq$  med},
        22/above//{maint $\leq$  high},
        23/above//{buying $\leq$  high},
        24/above//{persons $\geq$ four, safety $\geq$ med},
        0/below//{1393, 1397, 1444},
        1/below//{1393, 1180, 1444},
        2/below//{1211, 1180, 1397},
        4/below//{686, 1393, 1444},
        5/below//{1211, 686, 851},
        6/below//{686, 1180, 1444},
        7/below//{1211, 1180, 1444},
        8/below//{1180},
        9/below//{1397},
        10/below//{686, 851, 376},
        11/below//{686, 851, 1444},
        12/below//{1211, 851, 376},
        13/below//{1211, 376, 1180},
        19/below//{376, 1180, 1444},
        0/right/a/{4P, 20N},
        1/right/a/{36P, 108N},
        2/right/a/{28P, 164N},
        3/right/a/{42P, 246N},
        4/right/a/{44P, 162N},
        5/right/a/{4P, 92N},
        6/right/a/{42P, 246N},
        7/right/a/{22P, 266N},
        8/right/a/{28P, 164N},
        9/right/a/{35P, 253N},
        10/right/a/{43P, 138N},
        11/right/a/{24P, 192N},
        12/right/a/{21P, 112N},
        13/right/a/{26P, 113N},
        14/right/a/{89P, 318N},
        15/right/a/{58P, 374N},
        16/right/a/{46P, 338N},
        17/right/a/{60P, 228N},
        18/right/a/{47P, 230N},
        19/right/a/{95P, 193N},
        20/right/a/{130P, 254N},
        21/right/a/{165P, 219N},
        22/right/a/{130P, 446N},
        23/right/a/{130P, 446N},
        24/right/a/{250P, 518N}
      } \coordinate[label={[\labelopts]\labelpos:{\labelcontent}}](c) at (\nodename);
    \end{scope}
  \end{scope}
\end{tikzpicture}
  \caption{Tree predicate view scalings for the car data set. The
    centroid elements that have the positive class are displayed top
    and the negative are bottom.}
  \label{fig:ps-view1}
\end{figure}

For our final take on explaining Random Forest classifiers using
conceptual views, we choose an extreme hyper parameters in order to
show the viability of our approach. In detail, we set the number of
trees to 100, which is commonly accepted default
value~\cite{probst2019hyperparameters}. For the maximum depth of the
trees we set no limitation, i.e., the training algorithm splits nodes
until class purity is achieved. The resulting conceptual views, more
precisely their number of formal concepts, naturally rises to the amount
as seen in the last section and more. Although the computation of such
and larger sets of formal concepts is not a challenge for algorithms
from the field of Formal Concept Analysis, human comprehensibility now
requires the application of the selection and aggregation methods
presented in~\cref{sec:post-scaling}.

We want to start with combining the composition method with object,
attribute, and concept selection procedures.  For this, we first
employ \texttt{KMedoids} clustering from sklearn to select a smaller
number of representative objects. We determine the parameter $k$,
i.e., the number of medoids, to be nineteen, by trial and error and
evaluating the silhouette score on the results within the range $2\leq
k\leq 50$. In a second step, we restrict the set of view attributes
(i.e., predicates) in the following way. We computed for all seven
many-valued attributes of the car data table their significance for
the classification using the notion of permutation
importance~\cite{altmann2010permutation}. The result allows us to
select the most important ones. For the rest of our study, we stick to
four. From these data table attributes, we can derive a subset of
important predicates, i.e., view attributes.

Starting from this state, we have applied various other methods
for selection. In \cref{fig:ps-view1} we depict the result for the
tree predicate view when additionally applying
\begin{inparaenum}[a)]
\item composition, more specific, we partition the object set using
  the related class labels, and
\item the TITANIC algorithm~\cite{titanic}.
\end{inparaenum}

The top diagram in~\cref{fig:ps-view1} is comprised of the objects
bearing the positive class label, and the bottom diagram is comprised
of objects bearing the negative class label. The respective values for
minimum support are five and three, i.e., all concepts in the view
using the positive labels have an extent size of at least five, and
analogously three for the negative part.  These values were chosen
such that the resulting iceberg concept lattices is of comprehensible
size. The particular values three and five seem to reflect the
imbalance of the class labels to some extent, however, this
observation is not essential.  Both diagrams are annotated in the
usual way. In addition to that, we annotated on the right to each
concept node the class purity in this concept, i.e., the number of
positive and negative labeled objects of the data table.


First of all, we observe structural differences between the iceberg
concept lattices of the positive (PICL) and negative (NICL) center
objects, although both have twenty-five formal concepts. PICL has
twelve co-atoms while NICL has four co-atoms. NICL has a longest chain
of five elements while PICL's is three. We claim that PICL is easier
to comprehend than NICL due to its smaller depth. At the same time,
NICL implies that the description of the negative class is more
difficult and demanding with respect to the number of attributes,
i.e., intent sizes.  More generally, in both diagrams we can infer
descriptions of the positive and negative class from the concepts
lowest in the diagrams. Even though there are methods to explain the
influence of single attributes on the classification, the iceberg concept
lattice allows to easily comprehend the influence of arbitrary
attribute combinations. For example, the concept with extent label
\texttt{300, 490, 564, 541, 333} is a result of the attribute
combination $\emph{buying}\geq\emph{high}$ and
$\emph{safety}\geq\emph{med}$.  Furthermore, the conceptual structures
allows to identify attributes with a high global influence on the
classification. For example, $\emph{maint}\leq\emph{high}$ has five
direct lower neighbors whereas $\emph{persons}\leq\emph{two}$ has one
direct lower neighbor. 

A particularly interesting observation in the NICL diagram is the
presence of attributes that support all objects. Hence, these are
essential for classification of all objects with the negative class
label. This conclusion is especially easy to infer from the conceptual
structure compared with analyzing all hundred trees of the underlying
Random Forest. Finally, the iceberg concept lattice of NICL reveals
redundant attributes. For example, the concept annotated with the
object extent \texttt{1393, 1397, 1444} has three annotated attributes
of which only one is needed to identify this concept.

A more general inference about the Random Forest that the tree
predicate view allows is to identify ``costly'' objects. By this we
mean objects whose classification required a large number of
(potentially redundant) threshold value tests. For example, we refer
the reader to the concept bearing the objects \texttt{1393,1397,1444}
within NICL. On the one hand, all these objects required redundant
testing of attributes, namely \emph{maint$\geq$med} and
\emph{maint$\geq$high}. On the other hand, the composition of the
intent includes four data table attributes, i.e.,
\emph{buying,safety,maint}, and \emph{persons}.

\begin{figure}
  \centering
  \colorlet{mivertexcolor}{black!80}
\colorlet{jivertexcolor}{black!80}
\colorlet{vertexcolor}{black!80}
\colorlet{bordercolor}{black!80}
\colorlet{linecolor}{gray}
\tikzset{vertexbase/.style 2 args={semithick, shape=circle, inner sep=2pt, outer sep=0pt, draw=bordercolor},%
  vertex/.style 2 args={vertexbase={#1}{}, fill=vertexcolor!45},%
  mivertex/.style 2 args={vertexbase={#1}{}, fill=mivertexcolor!45},%
  jivertex/.style 2 args={vertexbase={#1}{}, fill=jivertexcolor!45},%
  divertex/.style 2 args={vertexbase={#1}{}, top color=mivertexcolor!45, bottom color=jivertexcolor!45},%
  conn/.style={-, thick, color=linecolor}%
}
\tikzstyle{a} = [text width=0.8cm,font=\tiny\linespread{-1}\selectfont]
\tikzstyle{r3} = [label distance=4mm]
\tikzstyle{r2} = [label distance=3mm]
\tikzstyle{rr} = [text width=2cm,xshift=0.6cm]
\tikzstyle{ll} = [text width=2cm,xshift=-0.4cm]
\tikzstyle{d} = [text width=2cm, align=center]
\begin{tikzpicture}[scale=0.15,font=\tiny\linespread{0}, label distance = 1mm]
  \begin{scope} 
    \begin{scope} 
      \foreach \nodename/\nodetype/\param/\xpos/\ypos in {%
        0/vertex//43.75360923965352/20.514918190567855,
        1/mivertex//13.686236766121262/22.420596727622723,
        2/vertex//43.118383060635246/32.90182868142446,
        3/jivertex//13.262752646775738/34.27815206929742,
        4/jivertex//56.14051973051013/40.630413859480285,
        5/vertex//13.051010587102972/42.00673724735324,
        6/jivertex//43.118383060635246/47.61790182868145,
        7/jivertex//33.80173243503369/52.008796920115499,
        8/jivertex//24.37921077959576/52.69971126082775,
        9/mivertex//79.53801732435039/54.60538979788261,
        10/vertex//60.58710298363814/54.011260827718995,
        11/mivertex//43.43599615014438/54.92300288739175,
        12/jivertex//54.0230991337825/55.13474494706452,
        13/mivertex//13.79210779595764/58.84023099133786,
        14/vertex//6.651588065447534/63.8517805582291,
        15/vertex//21.944177093358995/61.69874879692015,
        16/mivertex//-2.194417709335916/68.89797882579404,
        17/mivertex//70.22136669874882/69.00384985563045,
        18/mivertex//14.850818094321454/69.21559191530322,
        19/mivertex//55.50529355149184/69.21559191530322,
        20/mivertex//43.330125120308004/69.63907603464875,
        21/mivertex//31.049085659287776/69.74494706448513,
        22/vertex//42.059672762271425/92.71896053897984
      } \node[\nodetype={\param}{}] (\nodename) at (\xpos, \ypos) {};
    \end{scope}
    \begin{scope} 
      \path (14) edge[conn] (18);
      \path (19) edge[conn] (22);
      \path (14) edge[conn] (16);
      \path (2) edge[conn] (6);
      \path (7) edge[conn] (21);
      \path (5) edge[conn] (21);
      \path (5) edge[conn] (16);
      \path (4) edge[conn] (20);
      \path (16) edge[conn] (22);
      \path (10) edge[conn] (19);
      \path (7) edge[conn] (20);
      \path (12) edge[conn] (20);
      \path (21) edge[conn] (22);
      \path (15) edge[conn] (18);
      \path (8) edge[conn] (20);
      \path (11) edge[conn] (20);
      \path (4) edge[conn] (17);
      \path (10) edge[conn] (17);
      \path (6) edge[conn] (21);
      \path (8) edge[conn] (18);
      \path (9) edge[conn] (17);
      \path (17) edge[conn] (22);
      \path (13) edge[conn] (18);
      \path (3) edge[conn] (20);
      \path (20) edge[conn] (22);
      \path (18) edge[conn] (22);
      \path (0) edge[conn] (4);
      \path (6) edge[conn] (19);
      \path (12) edge[conn] (19);
      \path (2) edge[conn] (7);
      \path (1) edge[conn] (3);
      \path (2) edge[conn] (12);
      \path (3) edge[conn] (16);
      \path (15) edge[conn] (21);
      \path (0) edge[conn] (8);
    \end{scope}
    \begin{scope} 
      \foreach \nodename/\labelpos/\labelopts/\labelcontent in {%
        1/above//{persons $\geq$ more},
        9/above//{safety $\geq$ high},
        11/above//{maint $\geq$ vhigh},
        13/above//{buying $\geq$ vhigh},
        16/above//{persons $\geq$ four},
        17/above//{safety $\geq$ med},
        18/above//{buying $\geq$ high},
        19/above//{maint $\geq$ med},
        20/above//{maint $\geq$ high},
        21/above//{buying $\geq$ med},
        0/below//{188, 490, 541},
        1/below//{188, 1344, 564},
        2/below//{1041, 541, 893},
        3/below//{1041},
        4/below//{893},
        5/below//{1041, 564, 333},
        6/below//{300},
        7/below//{564},
        8/below//{564},
        9/below//{188, 1520, 893},
        10/below//{1520, 541, 893},
        11/below//{1344, 490, 893},
        12/below//{1344},
        13/below//{188, 300, 333},
        14/below//{188, 564, 333},
        15/below//{300, 564, 541, 333},
        22/below//{1674},
        0/right//{252P, 36N},
        1/right//{192P, 94N},
        2/right//{287P, 108N},
        3/right//{386P, 190N},
        4/right//{386P, 190N},
        5/right//{416P, 338N},
        6/right//{464P, 226N},
        7/right//{351P, 108N},
        8/right//{396P, 36N},
        9/right//{277P, 299N},
        10/right//{389P, 354N},
        11/right//{360P, 72N},
        12/right//{452P, 190N},
        13/right//{360P, 72N},
        14/right//{396P, 180N},
        15/right//{447P, 174N},
        16/right//{634P, 518N},
        17/right//{634P, 518N},
        18/right//{684P, 180N},
        19/right//{715P, 354N},
        20/right//{674P, 190N},
        21/right//{697P, 338N},
        22/right//{1210P, 518N}
      } \coordinate[label={[\labelopts]\labelpos:{\labelcontent}}](c) at (\nodename);
    \end{scope}
  \end{scope}
\end{tikzpicture}
  \colorlet{mivertexcolor}{black!80}
\colorlet{jivertexcolor}{black!80}
\colorlet{vertexcolor}{black!80}
\colorlet{bordercolor}{black!80}
\colorlet{linecolor}{gray}
\tikzset{vertexbase/.style 2 args={semithick, shape=circle, inner sep=2pt, outer sep=0pt, draw=bordercolor},%
  vertex/.style 2 args={vertexbase={#1}{}, fill=vertexcolor!45},%
  mivertex/.style 2 args={vertexbase={#1}{}, fill=mivertexcolor!45},%
  jivertex/.style 2 args={vertexbase={#1}{}, fill=jivertexcolor!45},%
  divertex/.style 2 args={vertexbase={#1}{}, top color=mivertexcolor!45, bottom color=jivertexcolor!45},%
  conn/.style={-, thick, color=linecolor}%
}
\tikzstyle{a} = [text width=0.8cm,font=\tiny\linespread{-1}\selectfont]
\tikzstyle{r3} = [label distance=4mm]
\tikzstyle{r2} = [label distance=3mm]
\tikzstyle{rr} = [text width=2cm,xshift=0.6cm]
\tikzstyle{ll} = [text width=2cm,xshift=-0.4cm]
\tikzstyle{d} = [text width=2cm, align=center]
\begin{tikzpicture}[scale=0.1,font=\tiny\linespread{0}, label distance = 1mm]
  \begin{scope} 
    \begin{scope} 
      \foreach \nodename/\nodetype/\param/\xpos/\ypos in {%
        0/vertex//72.23291626564009/31.631376323387883,
        1/vertex//49.153031761308974/32.16073147256979,
        2/vertex//3.8402309913378048/35.65447545717038,
        3/vertex//39.09528392685275/40.31280076997115,
        4/vertex//8.392685274302202/41.37151106833496,
        5/divertex//96.15976900866224/47.40615976900869,
        6/jivertex//-7.699711260827748/47.51203079884507,
        7/vertex//7.22810394610201/48.04138594802698,
        8/vertex//41.106833493744/55.66410009624642,
        9/vertex//72.55052935514922/56.828681424446614,
        10/vertex//86.94898941289708/57.56977863330128,
        11/vertex//57.30510105871033/57.99326275264681,
        12/vertex//25.649663137632338/58.09913378248319,
        13/mivertex//-6.111645813282024/60.64003849855634,
        14/divertex//96.79499518768051/62.86333012512035,
        15/mivertex//28.190567853705495/71.65062560153999,
        16/mivertex//71.59769008662178/73.29740134744952,
        17/mivertex//57.093358999037555/74.4032723772859,
        18/mivertex//88.21944177093366/75.46198267564971,
        19/vertex//71.80943214629454/94.73051010587109
      } \node[\nodetype={\param}{}] (\nodename) at (\xpos, \ypos) {};
    \end{scope}
    \begin{scope} 
      \path (7) edge[conn] (15);
      \path (5) edge[conn] (14);
      \path (16) edge[conn] (19);
      \path (13) edge[conn] (15);
      \path (6) edge[conn] (13);
      \path (6) edge[conn] (12);
      \path (3) edge[conn] (8);
      \path (0) edge[conn] (10);
      \path (18) edge[conn] (19);
      \path (2) edge[conn] (7);
      \path (17) edge[conn] (19);
      \path (10) edge[conn] (16);
      \path (11) edge[conn] (16);
      \path (11) edge[conn] (17);
      \path (7) edge[conn] (16);
      \path (2) edge[conn] (11);
      \path (0) edge[conn] (11);
      \path (1) edge[conn] (7);
      \path (1) edge[conn] (10);
      \path (3) edge[conn] (12);
      \path (1) edge[conn] (8);
      \path (4) edge[conn] (8);
      \path (14) edge[conn] (16);
      \path (9) edge[conn] (18);
      \path (15) edge[conn] (19);
      \path (12) edge[conn] (15);
      \path (0) edge[conn] (5);
      \path (8) edge[conn] (15);
      \path (0) edge[conn] (9);
      \path (4) edge[conn] (13);
      \path (9) edge[conn] (17);
      \path (12) edge[conn] (17);
      \path (10) edge[conn] (18);
      \path (8) edge[conn] (18);
      \path (3) edge[conn] (9);
      \path (2) edge[conn] (6);
    \end{scope}
    \begin{scope} 
      \foreach \nodename/\labelpos/\labelopts/\labelcontent in {%
        4/above//{buying $\geq$ vhigh},
        5/above//{maint $\geq$ vhigh},
        13/above//{buying $\geq$ high},
        14/above//{maint $\geq$ high},
        15/above//{buying $\geq$ med},
        16/above//{maint $\geq$ med},
        17/above//{safety $\geq$ high},
        18/above//{persons $\geq$ more},
        19/above//{persons $\geq$ four, safety $\geq$ med},
        0/below//{1397},
        1/below//{1180},
        2/below//{686},
        3/below//{1211},
        4/below//{376},
        5/below//{1393},
        6/below//{851},
        14/below//{1444},
        0/right//{26P, 22N},
        1/right//{104P, 112N},
        2/right//{75P, 69N},
        3/right//{45P, 99N},
        4/right//{60P, 36N},
        5/right//{120P, 72N},
        6/right//{77P, 115N},
        7/right//{206P, 226N},
        8/right//{118P, 170N},
        9/right//{49P, 143N},
        10/right//{114P, 174N},
        11/right//{81P, 207N},
        12/right//{81P, 207N},
        13/right//{204P, 180N},
        14/right//{194P, 190N},
        15/right//{232P, 344N},
        16/right//{222P, 354N},
        17/right//{85P, 299N},
        18/right//{130P, 254N},
        19/right//{250P, 518N}
      } \coordinate[label={[\labelopts]\labelpos:{\labelcontent}}](c) at (\nodename);
    \end{scope}
  \end{scope}
\end{tikzpicture}

  \caption{Forest view based on individual tree predicate views. We
    restricted the predicates to the ones using expressions with
    $\geq$. The diagram at the top shows the related iceberg concept
    lattice for the center objects bearing the positive class label,
    whereas the bottom diagram shows the analogue for the negative
    class labels.}
  \label{fig:ps-view2}
\end{figure}

\subsubsection*{Ordinal Factors}
Interordinal scalings are, in general, more complex for human
readers. The reason for this is that in interordinal scaling an
interval of threshold values has to be considered instead of only a
single one. For example, in~\cref{fig:ps-view1} (bottom) we find the
concept $c$ with the extent \texttt{686,1180,1444} that is a lower
neighbor to two concepts bearing the attributes \emph{maint$\geq$med}
and \emph{maint$\leq$high} respectively. Thus, the human reader has to
consider the interval $[\text{\emph{med,high}}]$ within the linear
order of threshold values for \emph{maint}, which is
\emph{low,med,high,vhigh}. Moreover, $c$ has the attributes
\emph{buying$\leq$high}, \emph{persons$\geq$four}, and
\emph{safety$\geq$med}. Thus, a human reader has to comprehend
different \emph{directions} of order, i.e., $\geq$ and $\leq$, at the same
time. With this in mind, we focused on the $\geq$-ordering
in~\cref{fig:ps-view2}. Of course, this limits the expressiveness of
possible explanations based on the views. However, as illustrated
above, the comprehensibility increases. Furthermore, this approach
results in fewer concepts in general. Hence, it allows us to use lower
support values for the iceberg concept lattice. We present
in~\cref{fig:ps-view2} the iceberg concept lattice using the support
values of one for the negative class and three for the positive class.
This approach altogether can be considered as an ordinal factor
approach with respect to the context operation
\emph{apposition}~\cite{fca-book}.

In addition to the advantages discussed above, one may apply all
analysis deductions to the diagrams in~\cref{fig:ps-view2} that were
explained for~\cref{fig:ps-view1}.

\subsubsection{The Interordinal Predicate View}
As for our last analysis example we present an ordinal factor of the
interordinal predicate view on the Random Forest. As
in~\cref{fig:ps-view2}, we choose $\geq$ and the same parameter for
support. In contrast to the examples based on the tree predicate view,
the lattices shown in~\cref{fig:ps-view3} encode a different kind of
information for explaining the Random Forest. More precisely, the
interordinal predicate view represents the model relationship between
objects and the predicates of the Random Forest.

A distinctive feature of the interordinal predicate view is that it
reflects implications between attribute thresholds values that are
enforced by their order relation. For example, we find
in~\cref{fig:ps-view3} that
\emph{vhigh}$\to$\emph{high}$\to$\emph{med} for the \emph{buying}
attribute of the data table. Moreover, one can easily read the
corresponding chains from the diagram.

Furthermore, one can infer from interordinal predicate view all valid
attribute implications between values of different attributes of the
data table. For example, one can find in~\cref{fig:ps-view3} that the
attribute value \emph{safety$\geq$med} implies
\emph{maint$\geq$med}. Although there are also  implications present
in the iceberg concept lattice of the tree predicate view, we may
note that those are not necessarily implications within the data
table.

We would like to conclude the analysis of the interordinal predicate
view by emphasizing two important facts. First, the particular
attribute threshold values were derived by the training procedure
(i.e., Random Forest) and do therefore represent the ``view'' of the
trained classifier function on the data. Hence, when revealing
threshold value implications by means of the interordinal predicate
view, we actually find implications that are valid within the data
table when viewed through the scaling of the Random Forest. Second,
the set of all valid implications with respect to all data objects
bearing the same class label is the implicational theory of this class
as ``seen'' by the Random Forest. Thus, by computing both
implicational theories, i.e., for both class labels, one can compare
both theories for similarities and differences.

\begin{figure}
  \centering

  \colorlet{mivertexcolor}{black!80}
\colorlet{jivertexcolor}{black!80}
\colorlet{vertexcolor}{black!80}
\colorlet{bordercolor}{black!80}
\colorlet{linecolor}{gray}
\tikzset{vertexbase/.style 2 args={semithick, shape=circle, inner sep=2pt, outer sep=0pt, draw=bordercolor},%
  vertex/.style 2 args={vertexbase={#1}{}, fill=vertexcolor!45},%
  mivertex/.style 2 args={vertexbase={#1}{}, fill=mivertexcolor!45},%
  jivertex/.style 2 args={vertexbase={#1}{}, fill=jivertexcolor!45},%
  divertex/.style 2 args={vertexbase={#1}{}, top color=mivertexcolor!45, bottom color=jivertexcolor!45},%
  conn/.style={-, thick, color=linecolor}%
}
\tikzstyle{a} = [text width=0.8cm,font=\tiny\linespread{-1}\selectfont]
\tikzstyle{r3} = [label distance=4mm]
\tikzstyle{r2} = [label distance=3mm]
\tikzstyle{rr} = [text width=2cm,xshift=0.6cm]
\tikzstyle{ll} = [text width=2cm,xshift=-0.4cm]
\tikzstyle{d} = [text width=2cm, align=center]
\begin{tikzpicture}[scale=0.1,font=\tiny\linespread{0}, label distance = 1mm]
  \begin{scope} 
    \begin{scope} 
      \foreach \nodename/\nodetype/\param/\xpos/\ypos in {%
        0/vertex//54.65832531280079/11.410009624639073,
        1/vertex//31.578440808469686/12.3628488931665,
        2/mivertex//-12.993262752646807/20.832531280077006,
        3/mivertex//85.89027911453326/20.832531280077006,
        4/jivertex//6.4870067372473414/23.26756496631377,
        5/mivertex//52.01154956689127/31.996150144369594,
        6/jivertex//36.977863330125125/32.47834456207893,
        7/vertex//86.52550529355156/38.40712223291628,
        8/mivertex//1.28873917228106194/39.35996150144372,
        9/mivertex//-13.734359961501475/40.853897978825816,
        10/vertex//14.003849855630406/42.21847930702601,
        11/jivertex//43.54186717998076/51.429258902791176,
        12/vertex//66.5158806544755/52.3820981713186,
        13/mivertex//5.745909528392673/55.66410009624642,
        14/vertex//22.791145332050043/63.60442733397502,
        15/vertex//80.17324350336867/66.78055822906646,
        16/mivertex//54.12897016361888/67.0981713185756,
        17/mivertex//105.68816169393654/82.66121270452363,
        18/mivertex//7.439846005774768/83.61405197305106,
        19/mivertex//55.71703561116461/83.82579403272382,
        20/vertex//50.0/100.0
      } \node[\nodetype={\param}{}] (\nodename) at (\xpos, \ypos) {};
    \end{scope}
    \begin{scope} 
      \path (17) edge[conn] (20);
      \path (16) edge[conn] (19);
      \path (7) edge[conn] (13);
      \path (14) edge[conn] (19);
      \path (13) edge[conn] (18);
      \path (0) edge[conn] (15);
      \path (10) edge[conn] (14);
      \path (1) edge[conn] (4);
      \path (0) edge[conn] (10);
      \path (12) edge[conn] (15);
      \path (19) edge[conn] (20);
      \path (15) edge[conn] (17);
      \path (11) edge[conn] (14);
      \path (1) edge[conn] (6);
      \path (6) edge[conn] (10);
      \path (18) edge[conn] (20);
      \path (4) edge[conn] (9);
      \path (5) edge[conn] (16);
      \path (6) edge[conn] (11);
      \path (15) edge[conn] (19);
      \path (11) edge[conn] (16);
      \path (12) edge[conn] (16);
      \path (10) edge[conn] (13);
      \path (4) edge[conn] (10);
      \path (0) edge[conn] (7);
      \path (7) edge[conn] (17);
      \path (3) edge[conn] (7);
      \path (9) edge[conn] (18);
      \path (8) edge[conn] (13);
      \path (14) edge[conn] (18);
      \path (2) edge[conn] (9);
    \end{scope}
    \begin{scope} 
      \foreach \nodename/\labelpos/\labelopts/\labelcontent in {%
        2/above//{safety $\geq$ high},
        3/above//{persons $\geq$ more},
        5/above//{buying $\geq$ vhigh},
        8/above//{maint $\geq$ vhigh},
        9/above//{safety $\geq$ med},
        13/above//{maint $\geq$ high},
        16/above//{buying $\geq$ high},
        17/above//{persons $\geq$ four},
        18/above//{maint $\geq$ med},
        19/above//{buying $\geq$ med},
        0/below//{188, 1041, 564},
        1/below//{188, 490, 541},
        2/below//{188, 1520, 893},
        3/below//{188, 1344, 564},
        4/below//{893},
        5/below//{188, 300, 333},
        6/below//{564},
        8/below//{1344, 490, 893},
        11/below//{300},
        12/below//{188, 564, 333},
        20/below//{1674},
        0/right//{324P, 108N},
        1/right//{252P, 36N},
        2/right//{225P, 207N},
        3/right//{194P, 94N},
        4/right//{324P, 108N},
        5/right//{360P, 72N},
        6/right//{396P, 36N},
        7/right//{386P, 190N},
        8/right//{360P, 72N},
        9/right//{510P, 354N},
        10/right//{540P, 108N},
        11/right//{540P, 108N},
        12/right//{396P, 180N},
        13/right//{674P, 190N},
        14/right//{746P, 226N},
        15/right//{520P, 344N},
        16/right//{684P, 180N},
        17/right//{634P, 518N},
        18/right//{942P, 354N},
        19/right//{952P, 344N},
        20/right//{1210P, 518N}
      } \coordinate[label={[\labelopts]\labelpos:{\labelcontent}}](c) at (\nodename);
    \end{scope}
  \end{scope}
\end{tikzpicture}
  \caption{Forest view based on individual interodinal predicate
    views. We restricted the predicates to the ones using expressions
    with $\geq$. The diagram at shows the related iceberg concept
    lattice for the center objects bearing the positive class label. }
  \label{fig:ps-view3}
\end{figure}


\paragraph{Discussion and Outlook}
Obviously, our approach is capable of identifying important
combinations of attribute threshold values and their influence on the
classification results. Certainly, there is a wide range of
combinatorial methods to identify interesting and meaningful
combinations. However, the major advantage of the proposed conceptual
method is that it provides a structured mathematical way to directly
and efficiently identify the important combinations and, at the same
time, their semantic interpretation~\cite{fca-book}.

At this point, we would like to conclude our study by pointing out
that the developed conceptual structures allow for the possibility
for the application of a variety of other conceptual methods. For
example, as outlined in earlier in this section, an analysis of implication
structures between attribute threshold values within a class can
reveal new insights into a Random Forest. Likewise, the conceptual
views allow to compare different trained Random Forest classifiers for
their implicational differences and similarities. A detailed
investigation of these questions is planned as future work.

\section{Related Work}
There are a multitude of classification methods using trees and
tree-ensembles, e.g., decision tree~\cite{dtree}, Random
Forest\cite{rforest}, or decision stumps~\cite{iba1992induction}, to
name a few. The most important property of a single decision tree
classifier is its human interpretability. For example, the
visualization of such a tree provides insights to the classification
process and, at the same time, presents a scaled view on the data
set. Unfortunately, the latter approach received only very little
research attention, so far. Methods that address these \emph{scalings}
are \texttt{RandomTreesEmbedding}, as implemented in \emph{sklearn},
and \emph{tree views}~\cite{dudyrev}. The first method extracts a
partition of the data set objects depending on the tree leafs that
classify them. This partition view, however, is a very coarse scaling
of the data set and makes very little use of the hierarchical tree
structures. The second method analyses the order structure of the
trees through a concept lattice. While the authors provide a novel
translation of the tree structures into the realm of Formal Concept
Analysis, they do not elaborate how those can be utilized for
interpretation. Furthermore, they solely reflect the order structure
and its hierarchy induced on the data set. Proceeding in this manner
does not account for the used tree predicates, which are essential for
human comprehension. Nonetheless, the translation approach itself is
fruitful since this enables the application of FCA based post
processing methods, thus explanations. For example, scale-measures
\cite{smeasure}, TITANIC~\cite{titanic}, core
structures~\cite{pqcores} or importance measures~\cite{measures}.

Other methods that try to achieve a unified view on tree ensembles
combine all trees into a new tree through
merging~\cite{Tmerging}. Yet, there are two main disadvantages of
these approaches. The first is that their output is again a tree,
which in contrast to a lattice order allows only for linear paths for
each node. Thus, they loose the ability to cope with missing
information and do not cover the concurrency of tree ensemble. The
second disadvantage regards the interpretation of the output: the sole
goal of the outputted merged tree is to induce the same partition on
the data set, as the ensemble would. This, however, omits the internal
representations of the trees.

A different, yet related line of research is the construction
classifiers from concept
lattices~\cite{prokasheva2013classification,DBLP:journals/ijufks/BelohlavekBOV08}.
For example, one may compute the concept lattice in a top-to-bottom
fashion with class purity, used as stopping
criterion~\cite{conf/cla/BelohlavekBOV07}, and then select a tree from
the generated partial ordered set. Although mathematically elegant,
these approaches are outperformed by methods like Random Forest.

\section{Conclusion}
We have formally introduced conceptual views on trees and tree
ensemble classifiers, a novel approach for analyzing and globally
explaining tree based machine learning models. In order to achieve
this, we have interpreted the splitting predicates of decision trees
as attributes of a formal context and thus transformed them into a
conceptual structure through conceptual scaling. Equipped with this
method, a user can gain new in-depth knowledge about tree-based
classification functions. At least as important is the now possible
analysis of the view of a classifier on its training data as well as
on previously unknown data.

In detail we introduced two novel approaches, i.e., views, for the
conceptualization of tree ensembles and compared them previous
work. To this end, we proved that the methods we presented have higher
granularity, and thus expressive power, for explaining tree-based
classifiers. Our main theoretical result is that any tree ensemble is
embedded in the introduced tree conceptual views. In order to underpin
our theoretical modeling and results, we conducted a parameter study
on how the different views are effected by parameters of the
classification models. Here we compared the formal concepts with
respect to the set of decision trees. Since the resulting conceptual
views may become large, we introduced different scaling and
post-processing methods that preserve significant parts of the
relevant knowledge within a tree ensemble. The first method is logical
scaling, which allows for comprehending the classification of a single
object, and how the classification of said object would change with
respect to changes in the attribute values.. A second scaling method
is in regard to the number of thresholds induced by the node
predicates for each attribute. We argued how those can be reduced
using conceptual views. Overall the scaling of conceptual views of
tree ensemble classifiers should be further investigated in future
work. Finally we want to point out that we demonstrated the
applicability of our approach on a real world data set and a Random
Forest trained with standard parameters, i.e., 100 trees and no depth
limitation. In particular, the number of trees used in the evaluation
distinguishes our method from that of previous work based on FCA,
which focused on a small single-digit number.

In our work, we did not focus on technical details of the tree
ensembles, especially hyperparameter studies for computing trees and
Random Forests, since our approach empahsizes explaining a
\emph{given} forest with respect to known, and potentially unknown,
data. Nonetheless, a detailed study investigating the relationship
between the hyperparameters and the resulting forests and their
different conceptual views could provide deeper insights into the
training process of Random Forests. Another closely related topic we
did not dive into is that the introduced views in combination with the
classifier can be employed for automatically scaling of many-valued
data tables. Furthermore, we could also envision applications for
enumerating distinct decision trees, as they are lattice
ordered~\cite{pmlr-v70-ruggieri17a}. Also, surrogate-based
approaches~\cite{pmlr-v151-kim22b} for explaining black-box classifier
functions may profit from in-depth explanations based on our
conceptual views. The same applies to bandit-based
approaches~\cite{2207.06355}. In conclusion, our work has opened a
door for the conceptual analysis of diverse supervised
machine-learning models, which should be walked with future work on
explainable AI.

\printbibliography
\end{document}